\theoremstyle{plain}
\newtheorem{theorem}{Theorem}[section]%
\newtheorem{corollary}[theorem]{Corollary} %
\newtheorem{proposition}[theorem]{Proposition}
\theoremstyle{definition}
\newtheorem{definition}[theorem]{Definition}
\theoremstyle{remark}
\newcommand{\R}{\ifmmode\mathbb{R}\else$\mathbb{R}$\fi}
\newcommand{\N}{\ifmmode\mathbb{N}\else$\mathbb{N}$\fi}
\newcommand{\Z}{\ifmmode\mathbb{Z}\else$\mathbb{Z}$\fi}
\newcommand{\Q}{\ifmmode\mathbb{Q}\else$\mathbb{Q}$\fi}
\let\tn\textnormal
\newcommand{\bmx}{{\bm{x}}}
\newcommand{\bmy}{{\bm{y}}}
\newcommand{\bmzero}{{\bm{0}}}
\newcommand{\bmtheta}{{\bm{\theta}}}
\newcommand{\bmbeta}{{\bm{\beta}}}
\newcommand{\bmPhi}{{\bm{\Phi}}}
\newcommand{\calE}{{\mathcal{E}}}
\newcommand{\calN}{{\mathcal{N}}}
\newcommand{\calO}{{\mathcal{O}}}
\newcommand{\calD}{{\mathcal{D}}}
\newcommand{\calS}{{\mathcal{S}}}
\newcommand{\calT}{{\mathcal{T}}}
\newcommand{\tildephi}{{\widetilde{\phi}}}
\newcommand{\tildeg}{{\widetilde{g}}}
\newcommand{\tildef}{{\widetilde{f}}}
\newcommand{\tildecalT}{{\widetilde{\calT}}}
\newcommand{\scrF}{{\mathscr{F}}}
\newcommand{\vcd}{\tn{VCDim}}
\newcommand{\holder}{\tn{H\"older}}
\DeclareMathOperator*{\argmin}{arg\,min}
\def\one{{\ensuremath{\mathds{1}}}}
\newcommand{\bin}{\tn{bin}\hspace{1.2pt}}
\newcommand{\mystep}[2]{\par \vspace{0.25cm}\noindent\textbf{\hspace{8pt}Step }$#1\colon$ #2 \vspace{0.18cm} \par }
\newcommand{\setMathResizeRate}[1]{\def\mathResizeRate{#1}}
\newcommand{\mathResize}[2][\mathResizeRate]{
	\scalebox{#1}[#1]{\(\displaystyle #2\)}
}
\definecolor{myturquoise}{HTML}{008080}
\definecolor{mycolor}{HTML}{BEBEBE}
\newcommand*\patchAmsMathEnvironmentForLineno[1]{%
	\expandafter\let\csname old#1\expandafter\endcsname\csname #1\endcsname
	\expandafter\let\csname oldend#1\expandafter\endcsname\csname end#1\endcsname
	\renewenvironment{#1}%
	{\linenomath\csname old#1\endcsname}%
	{\csname oldend#1\endcsname\endlinenomath}}%
\newcommand*\patchBothAmsMathEnvironmentsForLineno[1]{%
	\patchAmsMathEnvironmentForLineno{#1}%
	\patchAmsMathEnvironmentForLineno{#1*}}%
\journal{Neural Networks}
\long\def\black#1{{\color{black}#1}}
\begin{document}

\begin{frontmatter}	
\title{Neural Network Approximation: Three Hidden Layers Are Enough}

\author[nus]{Zuowei Shen}
\ead{matzuows@nus.edu.sg}

\author[pd]{Haizhao Yang}
\ead{haizhao@purdue.edu}

\author[nus]{Shijun Zhang}
\ead{zhangshijun@u.nus.edu}

\address[nus]{Department of Mathematics,  National University of Singapore}

\address[pd]{Department of Mathematics, Purdue University}

\begin{abstract}
	
A three-hidden-layer neural network with super approximation power is introduced. This network is built with the floor function ($\lfloor x\rfloor$), the exponential function ($2^x$), the step function ($\one_{x\geq 0}$), or their compositions as the activation function in each neuron and hence we call such networks as Floor-Exponential-Step (FLES) networks. 
For any width hyper-parameter $N\in\mathbb{N}^+$, it is shown that FLES networks with  width $\max\{d,N\}$ and three hidden layers can uniformly approximate a H\"older continuous function $f$ on $[0,1]^d$ with an exponential approximation rate $3\lambda (2\sqrt{d})^{\alpha} 2^{-\alpha N}$, where $\alpha \in(0,1]$ and $\lambda>0$ are the H\"older order and constant, respectively. More generally  for an arbitrary continuous function $f$ on $[0,1]^d$ with a modulus of continuity $\omega_f(\cdot)$, the constructive approximation rate is $2\omega_f(2\sqrt{d}){2^{-N}}+\omega_f(2\sqrt{d}\,2^{-N})$. Moreover, we extend such a result to general bounded continuous functions on a bounded set $E\subseteq\mathbb{R}^d$. As a consequence, this new class of networks overcomes the curse of dimensionality in approximation power when the variation of $\omega_f(r)$ as $r\rightarrow 0$ is moderate (e.g., $\omega_f(r)\lesssim r^\alpha$ for H\"older continuous functions), since the major term to be concerned in our approximation rate is essentially $\sqrt{d}$ times a function of $N$ independent of $d$ within the modulus of continuity. Finally, we extend our analysis to derive similar approximation results in the $L^p$-norm for $p\in[1,\infty)$  via replacing   Floor-Exponential-Step activation functions by  continuous activation functions.

\end{abstract}

\begin{keyword}
	
Exponential Convergence\sep  Curse of Dimensionality\sep  Deep Neural Network\sep Floor-Exponential-Step Activation Function\sep   Continuous Function.

\end{keyword}

\end{frontmatter}


 
\section{Introduction}

This paper studies the approximation power of neural networks and shows that three hidden layers are enough for neural networks to achieve super approximation capacity. In particular, leveraging the power of advanced yet simple activation functions, we will introduce new theories and network architectures with only three hidden layers achieving exponential convergence and avoiding the curse of dimensionality simultaneously for (H\"older) continuous functions with an explicit approximation bound. The theories established in this paper would provide new insights to explain why deeper neural networks are better than one-hidden-layer neural networks for large-scale and high-dimensional problems. The approximation theories here are constructive (i.e., with explicit formulas to specify network parameters) and quantitative (i.e., results valid for essentially arbitrary width and/or depth without lower bound constraints) 
with explicit error bounds working for three-hidden-layer networks with arbitrary width. 

Constructive approximation with quantitative results and explicit error bounds would provide important guides for deciding the network sizes in deep learning. For example, the (nearly) optimal approximation rates of deep ReLU networks \black{with width $\calO(N)$ and depth $\calO(L)$} for a Lipschitz continuous function and a $C^s$ function $f$ on $[0,1]^d$ are $\calO(\sqrt{d}N^{-2/d}L^{-2/d})$ and $\calO(\|f\|_{C^s}(\tfrac{N}{\ln N})^{-2s/d}(\tfrac{L}{\ln L})^{-2s/d})$  \cite{Shen2,Shen3}, respectively. For results in terms of the number of nonzero parameters, the reader is referred to \cite{yarotsky2017,Johannes,PETERSEN2018296,yarotsky18a,guhring2019error,yarotsky2019} and the reference therein. Obviously, the curse of dimensionality exists in ReLU networks for these generic functions and, therefore, ReLU networks would need to be exponentially large in $d$ to maintain a reasonably good approximation accuracy. The curse could be lessened when target function spaces are \black{smaller}. To name a few, \cite{poggio2017,barron2018approximation,Weinan2019,bandlimit,Wenjing,Hutzenthaler2018} and the reference therein for ReLU networks. The limitation of ReLU networks motivated the work in \cite{Shen4} to introduce Floor-ReLU networks built with either a Floor ($\lfloor x\rfloor$) or ReLU ($\max\{0,x\}$) activation function in each neuron. It was shown by construction in \cite{Shen4} that Floor-ReLU networks with width $\max\{d,\, 5N+13\}$ and depth $64dL+3$ can uniformly approximate a H{\"o}lder continuous function $f$ on $[0,1]^d$ with a root-exponential approximation rate $3\lambda d^{\alpha/2}N^{-\alpha\sqrt{L}}$ without the curse of dimensionality, where $\alpha\in (0,1]$ and $\lambda>0$ are the H\"older order and constant, respectively.

The most important message of \cite{Shen4} (and probably also of \cite{yarotsky2019}) is that the combination of simple activation functions can create super approximation power. In the Floor-ReLU networks mentioned above, the power of depth is fully reflected in the approximation rate $3\lambda d^{\alpha/2}N^{-\alpha\sqrt{L}}$ that is root-exponential in depth. However, the power of width is much weaker and the approximation rate is polynomial in width if depth is fixed. This seems to be inconsistent with recent development of network optimization theory \cite{Arthur18,du2018gradient,MeiE7665,NIPS2018_8049,Chen1,Yiping20,Luo2020TwoLayerNN}, where larger width instead of depth can ease the challenge of highly noncovex optimization. The mystery of the power of width and depth remains and it motivates us to demonstrate that width can also enable super approximation power when armed with appropriate activation functions. 

In particular, we explore the floor function, the exponential function ($2^x$), the step function ($\one_{x\geq 0}$), or their compositions as activation functions to build fully-connected feed-forward neural networks. These networks are called Floor-Exponential-Step (FLES) networks. As we shall prove by construction, Theorem~\ref{thm:main} below shows that FLES networks with width $\max\{d,N\}$ and three hidden layers can uniformly approximate a continuous function $f$ on $[0,1]^d$ with an exponential approximation rate $2\omega_f(2\sqrt{d})2^{-N}+\omega_{f}(2\sqrt{d}\,2^{-N})$, where $\omega_f(\cdot)$ is the modulus of continuity defined as 
\begin{equation*}
	\omega_f(r)\coloneqq \sup\big\{|f(\bmx)-f(\bmy)|: \|\bmx-\bmy\|_2\le r,\ \bmx,\bmy\in [0,1]^d\big\},\quad\tn{for any $r\ge0$.}
\end{equation*}
In particular, there are three kinds of activation functions denoted as $\sigma_1$, $\sigma_2$, and $\sigma_3$ in FLES networks (see Figure~\ref{fig:threeHL} for an illustration):
\begin{equation*}
	\sigma_1(x)\coloneqq\lfloor x\rfloor,\quad \sigma_2(x)\coloneqq 2^x,\quad \tn{and}\quad \sigma_3(x)\coloneqq\calT(x-\lfloor x\rfloor-\tfrac12),\quad \tn{for any $x\in \R$,}
\end{equation*}
where \[\calT(x)\coloneqq \one_{x\geq 0} =
\left\{\genfrac{}{}{0pt}{0}{
	1,\   x\ge 0,}
	{0,\  x<0,}
	\right.
	\quad \tn{ for any $x\in \R.$}\]

\begin{theorem}\label{thm:main}
	Given an arbitrary continuous function $f$ defined on $[0,1]^d$, for any $N\in \N^+$, there exist $a_1,a_2,\cdots,a_ N\in[0,\tfrac12)$ such that
	\begin{equation*}
		|\phi(\bmx)-f(\bmx)|\le 2\omega_f(2\sqrt{d})2^{-N}+\omega_{f}(2\sqrt{d}\,2^{-N}),
	\end{equation*}
	for any $\bmx=(x_1,x_2,\cdots,x_d)\in [0,1]^d$, where $\phi$ is defined by a formula in $a_1,a_2,\cdots,a_N$ as follows.
	\begin{equation*}
			\mathResize[0.92]{
			\phi(\bmx)=2\omega_f(2\sqrt{d})\sum_{j=1}^ N 2^{-j} \sigma_3\bigg(\, a_j\cdot\sigma_2\Big(\, 1+\sum_{i=1}^{d} 2^{(i-1)N}\sigma_1(2^{N-1} x_i)\, \Big)\, \bigg)+f(\bmzero)-\omega_f(2\sqrt{d}). }
	\end{equation*}
\end{theorem}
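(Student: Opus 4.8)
The plan is to read $\phi$ as a partition-and-encode construction, so first I would set up the grid. For $\bm x=(x_1,\dots,x_d)\in[0,1]^d$ put $k_i\coloneqq\sigma_1(2^{N-1}x_i)=\lfloor 2^{N-1}x_i\rfloor\in\{0,1,\dots,2^{N-1}\}$ and fold the multi-index $\bm k=(k_1,\dots,k_d)$ into the single positive integer $\theta(\bm x)\coloneqq 1+\sum_{i=1}^d 2^{(i-1)N}k_i$ appearing inside $\sigma_2$. Because $0\le k_i\le 2^{N-1}<2^N$, this is a valid base-$2^N$ expansion, so $\bm k\mapsto\theta$ is injective and $\theta$ is constant on, and identifies, each grid cell. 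On the cell with index $\bm k$ I would use the corner $\bm x_{\bm k}$ with coordinates $x_{\bm k,i}=k_i 2^{1-N}$; every $\bm x$ in that cell then satisfies $\|\bm x-\bm x_{\bm k}\|_2\le 2^{1-N}\sqrt d=2\sqrt d\,2^{-N}$, hence $|f(\bm x)-f(\bm x_{\bm k})|\le\omega_f(2\sqrt d\,2^{-N})$, which is precisely the second term of the target bound.

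The heart of the argument is that $\sigma_2$ and $\sigma_3$ jointly perform bit extraction. Writing the fractional part as $\{y\}\coloneqq y-\lfloor y\rfloor$, note that $\sigma_3(y)=\calT(\{y\}-\tfrac12)=\one_{\{y\}\ge 1/2}$. If $a_j=\sum_{m\ge1}b_{j,m}2^{-m}$ is the binary expansion of $a_j$, then $a_j\,\sigma_2(\theta)=a_j2^\theta=\sum_{m}b_{j,m}2^{\theta-m}$ has fractional part $0.b_{j,\theta+1}b_{j,\theta+2}\cdots$, so $\sigma_3\big(a_j\,\sigma_2(\theta)\big)=b_{j,\theta+1}$, the $(\theta+1)$-st binary digit of $a_j$. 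The design freedom is now transparent: by prescribing the digits of $a_1,\dots,a_N$ I can independently set, for each admissible index $\theta$, the $N$ bits $b_{1,\theta+1},\dots,b_{N,\theta+1}$, whence $\sum_{j=1}^N 2^{-j}b_{j,\theta+1}$ realizes any chosen value in the $N$-bit dyadic grid $\{m2^{-N}:0\le m<2^N\}\subseteq[0,1)$, one freely assignable value per grid cell. Since $\phi$ depends on $\bm x$ only through $\theta$, it is piecewise constant on the cells, which is what makes the per-cell analysis work.

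It remains to pick those bits to track $f$. Writing $\beta\coloneqq 2\omega_f(2\sqrt d)$ and assuming $\beta>0$ (if $\beta=0$ then $f$ is constant, $\phi\equiv f(\bmzero)=f$, and the bound is trivial), I would note that $\|\bm x_{\bm k}\|_2\le\sqrt d$ together with monotonicity of $\omega_f$ gives $|f(\bm x_{\bm k})-f(\bmzero)|\le\omega_f(\sqrt d)\le\omega_f(2\sqrt d)$, so the normalized target $t_{\bm k}\coloneqq\big(f(\bm x_{\bm k})-f(\bmzero)+\omega_f(2\sqrt d)\big)/\beta$ lies in $[0,1]$. For the index $\theta$ of each cell I would take $(b_{1,\theta+1},\dots,b_{N,\theta+1})$ to be the first $N$ binary digits of $t_{\bm k}$, so that $\big|\sum_{j=1}^N 2^{-j}b_{j,\theta+1}-t_{\bm k}\big|\le 2^{-N}$. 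Since $\beta t_{\bm k}+f(\bmzero)-\omega_f(2\sqrt d)=f(\bm x_{\bm k})$, multiplying through by $\beta$ turns this into $|\phi(\bm x)-f(\bm x_{\bm k})|\le\beta 2^{-N}=2\omega_f(2\sqrt d)2^{-N}$ on that cell, and the triangle inequality with the first-paragraph estimate delivers the claimed uniform bound.

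The one place that needs care, and which I expect to be the main obstacle, is making the identity $\sigma_3(a_j2^\theta)=b_{j,\theta+1}$ exact while simultaneously enforcing $a_j\in[0,\tfrac12)$. I would handle both by choosing finite (terminating) expansions: set $b_{j,1}=0$ and leave every digit whose position is not $\theta+1$ for some admissible $\theta$ equal to $0$. There are only finitely many admissible indices (about $(2^{N-1}+1)^d$, with $\theta\le 2^{dN}$), so each $a_j$ has only finitely many nonzero bits; then $\{a_j2^\theta\}$ is a terminating fraction, it is never exactly $1/2$ when $b_{j,\theta+1}=0$ (being a finite sum strictly below $1/2$), and the threshold comparison in $\sigma_3$ returns $b_{j,\theta+1}$ with no ambiguity at $1/2$. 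Finiteness also gives $a_j\le\tfrac12-2^{-M}<\tfrac12$, where $M$ is the largest active position, so the constraint $a_j\in[0,\tfrac12)$ holds. The remaining verifications — the base-$2^N$ injectivity and the coordinatewise diameter bound — are routine.
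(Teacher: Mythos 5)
Your proof is correct, and it runs on the same engine as the paper's --- the floor-based partition of the cube, the base-$2^N$ folding of the multi-index into a single integer $\theta$, and the bit-extraction identity $\sigma_3\big(a_j\cdot\sigma_2(\theta)\big)=b_{j,\theta+1}$ for numbers $a_j$ with prescribed (terminating) binary digits, which is exactly the paper's Proposition~\ref{prop:bitsExtract} --- but it is organized along a genuinely more direct route. The paper never attacks Theorem~\ref{thm:main} head-on: it first proves Theorem~\ref{thm:mainOld}, a variant valid only on the half-open cube $[0,1)^d$ with scale $2^N$ inside the floor, and then handles the boundary faces $x_i=1$ by invoking an extension lemma (Lemma 4.2 of \cite{Shen2}) to produce $g\in C(\R^d)$ with $\omega_g^{\R^d}=\omega_f$, applying Theorem~\ref{thm:mainOld} to $\tildeg(\bmx)=g(2\bmx)$, and composing with $\bmx\mapsto\bmx/2$; the factor $2\sqrt{d}$ and the exponent $N-1$ in the final formula are artifacts of that dilation. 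You instead observe that with scale $2^{N-1}$ the digits $k_i=\lfloor 2^{N-1}x_i\rfloor$ range over $\{0,1,\ldots,2^{N-1}\}$, which still lie strictly below the base $2^N$, so the index map stays injective on the \emph{closed} cube (the case $x_i=1$ just lands in a degenerate boundary cell), and the quantity $2\sqrt{d}\,2^{-N}$ appears directly as the cell diameter rather than through rescaling. What the paper's two-step route buys is reusability --- the same intermediate theorem plus the extension trick immediately yields Corollary~\ref{cor:subsetE} for general $E\subseteq[-R,R]^d$ --- while your route buys a self-contained proof of Theorem~\ref{thm:main} that needs no external extension lemma. You also correctly handled the two delicate points: choosing terminating expansions so the fractional part never equals exactly $\tfrac12$ when the extracted bit is $0$ (and so that $a_j<\tfrac12$), and the degenerate case $\omega_f(2\sqrt{d})=0$.
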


\black{We remark that $\phi$ in Theorem~\ref{thm:main} is essentially determined by $N$ parameters $a_1,a_2,\cdots,a_N$, which can be trained by a $(\sigma_1,\sigma_2,\sigma_3)$-activated network with width $\max\{d,N\}$, three hidden layers, and $2(d+N+1)$ nonzero parameters. See Figure~\ref{fig:threeHL} for an illustration.}

\begin{figure}[!htp]
	\centering
	\includegraphics[width=0.9888\textwidth]{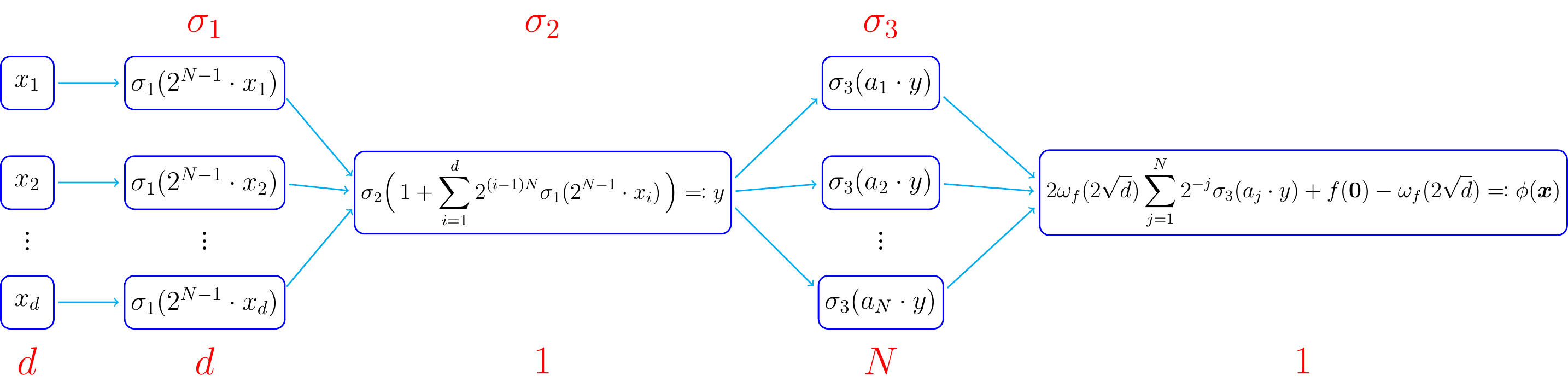}
	\caption{An illustration of the desired three-hidden-layer network in Theorem~\ref{thm:main} for any $\bmx=(x_1,x_2,\cdots,x_d)\in\R$. Each of the red functions ``$\textcolor{red}{\sigma_1}$'', ``$\textcolor{red}{\sigma_2}$'', and ``$\textcolor{red}{\sigma_3}$'' above the network is the activation function of the corresponding hidden layer.  The number of neurons in \black{each} hidden layer is indicated by the red number below it.}
	\label{fig:threeHL}
\end{figure}


 The rate in $\omega_f(2\sqrt{d}\,2^{-N})$ implicitly depends on $N$ through the modulus of continuity of $f$, while the rate in $2\omega_f(2\sqrt{d}){2^{-N}}$ is explicit in $N$. Simplifying the implicit approximation rate to make it explicitly depend on $N$ is challenging in general. However, if $f$ is a H{\"o}lder continuous function on $[0,1]^d$ of order $\alpha\in(0,1]$ with a H\"older constant $\lambda>0$, i.e., $f(\bmx)$ satisfying
\begin{equation}\label{eqn:Holder}
	|f(\bmx)-f(\bmy)|\leq \lambda \|\bmx-\bmy\|_2^\alpha,\quad \tn{for any $\bmx,\bmy\in[0,1]^d$,}
\end{equation}
then $\omega_f(r)\le \lambda r^\alpha$ for any $r\ge 0$. Therefore, in the case of H{\"o}lder continuous functions, the approximation rate is simplified to $3\lambda (2\sqrt{d})^{\alpha}2^{-\alpha N}$ as shown in the following corollary. In the special case of Lipschitz continuous functions with a Lipschitz constant $\lambda>0$, the approximation rate is simplified to $6\lambda\sqrt{d}\,2^{-N}$.

\begin{corollary}
	\label{coro:main}
	Given any H{\"o}lder continuous function $f$ on $[0,1]^d$ of order $\alpha\in(0,1]$ with a H\"older constant $\lambda>0$, for any $N\in \N^+$, there exists
	$a_1,a_2,\cdots,a_N$
such that
	\begin{equation*}
		|\phi(\bmx)-f(\bmx)|\le 3\lambda (2\sqrt{d})^{\alpha}{2^{-\alpha N}},\quad \tn{for any $\bmx=(x_1,x_2,\cdots,x_d)\in [0,1]^d$,}
	\end{equation*}
	where $\phi$ is defined by a formula in $a_1,a_2,\cdots,a_N$ as follows.
	\begin{equation*}
			\mathResize[0.92]{
			\phi(\bmx)=2\omega_f(2\sqrt{d})\sum_{j=1}^ N 2^{-j} \sigma_3\bigg(\, a_j\cdot \sigma_2\Big(\, 1+\sum_{i=1}^{d} 2^{(i-1)N}\sigma_1(2^{N-1} x_i)\, \Big)\, \bigg)+f(\bmzero)-\omega_f(2\sqrt{d}). 
			}
	\end{equation*}
\end{corollary}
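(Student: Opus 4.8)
The plan is to derive Corollary~\ref{coro:main} directly from Theorem~\ref{thm:main}, treating it as a specialization to the H\"older class rather than as an independent construction. First I would observe that a H\"older continuous function on $[0,1]^d$ is in particular continuous, so Theorem~\ref{thm:main} applies verbatim and supplies parameters $a_1,a_2,\cdots,a_N\in[0,\tfrac12)$ for which the explicitly defined $\phi$ satisfies
\begin{equation*}
	|\phi(\bmx)-f(\bmx)|\le 2\omega_f(2\sqrt{d})2^{-N}+\omega_f(2\sqrt{d}\,2^{-N}),\qquad \bmx\in[0,1]^d.
\end{equation*}
The very same $\phi$ and the very same parameters will serve for the corollary, so the only remaining work is to bound the right-hand side by $3\lambda(2\sqrt{d})^\alpha 2^{-\alpha N}$.

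Next I would translate the H\"older condition \eqref{eqn:Holder} into the estimate $\omega_f(r)\le\lambda r^\alpha$ for all $r\ge0$, which is immediate from the definition of the modulus of continuity. Evaluating this at the two relevant radii gives $\omega_f(2\sqrt{d})\le\lambda(2\sqrt{d})^\alpha$ and $\omega_f(2\sqrt{d}\,2^{-N})\le\lambda(2\sqrt{d}\,2^{-N})^\alpha=\lambda(2\sqrt{d})^\alpha 2^{-\alpha N}$. Substituting these into the displayed bound turns the two error contributions into $2\lambda(2\sqrt{d})^\alpha 2^{-N}$ and $\lambda(2\sqrt{d})^\alpha 2^{-\alpha N}$, respectively.

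The single elementary inequality to record is $2^{-N}\le 2^{-\alpha N}$, which holds because $0<\alpha\le1$ and $N\ge1$ force $0<2^{-N}\le1$, so raising this quantity to the smaller exponent $\alpha$ can only increase it. Using it on the first term yields $2\lambda(2\sqrt{d})^\alpha 2^{-N}\le 2\lambda(2\sqrt{d})^\alpha 2^{-\alpha N}$, and adding the second term gives the claimed uniform bound $3\lambda(2\sqrt{d})^\alpha 2^{-\alpha N}$.

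I expect no genuine obstacle here: the entire substance of the result resides in Theorem~\ref{thm:main}, and the corollary is a purely arithmetic consequence of it. The only point that demands a moment's attention is not conflating the explicit factor $2^{-N}$ with the implicit factor $2^{-\alpha N}$ arising from the modulus of continuity; one must invoke $\alpha\le1$ precisely to absorb the former into the latter so that the two contributions combine into a single clean term. Everything else is direct substitution.
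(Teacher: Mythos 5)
Your proposal is correct and matches the paper's own reasoning: the paper likewise obtains Corollary~\ref{coro:main} by specializing Theorem~\ref{thm:main}, using $\omega_f(r)\le\lambda r^\alpha$ and the fact that $2^{-N}\le 2^{-\alpha N}$ for $\alpha\in(0,1]$ to merge the two error terms into $3\lambda(2\sqrt{d})^\alpha 2^{-\alpha N}$. No gaps; the argument is exactly the intended one.
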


First, Theorem~\ref{thm:main} and Corollary~\ref{coro:main} show that the approximation capacity of three-hidden-layer neural networks with simple activation functions for continuous functions can be exponentially improved by increasing the network width, and the approximation error can be explicitly characterized in terms of the width $\calO(N)$. Second, this new class of networks overcomes the curse of dimensionality in the approximation power when the modulus of continuity is moderate, since the approximation order is essentially $\sqrt{d}$ times a function of $N$ independent of $d$ within the modulus of continuity. Therefore, three hidden layers are enough for neural networks to achieve exponential convergence and avoid the curse of dimensionality for generic functions. The width is also powerful in network approximation.

The rest of this paper is organized as follows. \black{In Section~\ref{sec:dis}, we discuss the application scope of our theory, study the connection between the approximation error and the Vapnik-Chervonenkis (VC) dimension, establish Corollary~\ref{cor:subsetE} to extend our analysis to general bounded continuous functions on a bounded set, and compare related works in the literature. We will prove Theorem~\ref{thm:main} and Corollary~\ref{cor:subsetE} in Section~\ref{sec:approxContFunc}. 
In Section~\ref{sec:more}, we explore alternative continuous activation functions other than $\sigma_1$, $\sigma_2$, and $\sigma_3$ for super approximation power.
}
Finally, we conclude this paper in Section~\ref{sec:conclusion}.

\section{Discussion}
\label{sec:dis}
In this section, we will further interpret our results and discuss related research in the field of neural network approximation. 

\subsection{Application scope of our theory in machine learning}

Let $\phi(\bm{x};\bm{\theta})$ denote a function computed by  a (fully-connected) network with $\bm{\theta}$ as the set of parameters. 
Given a target function $f$, consider the expected error/risk of $\phi(\bm{x};\bm{\theta})$
\begin{equation*}
	R_{\mathcal{D}}(\bm{\theta})\coloneqq \mathbb{E}_{\bm{x}\sim U(\mathcal{X})} \left[\ell( \phi(\bm{x};\bm{\theta}),f(\bm{x}))\right]
\end{equation*}
with a loss function typically taken as $\ell(y,y')=\tfrac{1}{2}|y-y'|^2$,
where $U(\mathcal{X})$ is an unknown data {distribution}  over $\mathcal{X}$.
For example, when $\ell(y,y')=\tfrac{1}{2}|y-y'|^2$ and $U$ is a uniform distribution over $\mathcal{X}=[0,1]^d$,
\begin{equation*}
	R_{\mathcal{D}}(\bm{\theta})=\int_{[0,1]^d} \tfrac12 |\phi(\bmx;\bm{\theta})-f(\bmx)|^2 d\bmx.
\end{equation*}
The expected risk minimizer $\bm{\theta}_{\mathcal{D}}$ is defined as 
\begin{equation*}
	\bm{\theta}_{\mathcal{D}}\coloneqq \argmin_{\bm{\theta}} R_{\mathcal{D}}(\bm{\theta}).
\end{equation*}
It 
is unachievable in practice since $f$ and $U(\mathcal{X})$ are not available.  Instead, we only have samples of $f$.

Given {samples} $\{( \bm{x}_i,f(\bm{x}_i))\}_{i=1}^n$,   the empirical risk is  defined as
\begin{equation*}
	R_{\mathcal{S}}(\bm{\theta}):=
	\frac{1}{n}\sum_{i=1}^n \ell\big( \phi(\bm{x}_i;\bm{\theta}),f(\bm{x}_i)\big).
\end{equation*}
And we usually use it to approximate/model  the expected risk $R_\calD(\bmtheta)$.
The goal of supervised learning
is to identify  the empirical risk minimizer
\begin{equation}\label{eqn:emloss}
	\bm{\theta}_{\mathcal{S}}=\argmin_{\bm{\theta}}R_{\mathcal{S}}(\bm{\theta}),
\end{equation}
to obtain $\phi(\bm{x};\bm{\theta}_{\mathcal{S}})\approx f(\bm{x})$.
When a numerical optimization method is applied to solve  \eqref{eqn:emloss}, it may result in a numerical solution (denoted as $\bm{\theta}_{\mathcal{N}}$) that is not a global minimizer. Hence, the actually learned function generated by a neural network  is $\phi(\bm{x};\bm{\theta}_{\mathcal{N}})$. The discrepancy between the target function $f$ and the actually learned function $\phi(\bmx;\bmtheta_\calN)$ is measured by an inference error 
\[R_{\mathcal{D}}(\bm{\theta}_{\mathcal{N}})=\mathbb{E}_{\bm{x}\sim U(\mathcal{X})} \left[\ell( {\phi(\bm{x};\bm{\theta}_\calN)},{f(\bm{x})})\right] \mathop{=}^{e.g.} \int_{[0,1]^d} \tfrac12 |{\phi(\bmx;\bm{\theta}_\calN)}-{f(\bmx)}|^2 d\bmx,\]
where the second equality holds when $\ell(y,y')=\tfrac{1}{2}|y-y'|^2$ and $U$ is a uniform distribution over $\mathcal{X}=[0,1]^d$.

Since $R_{\mathcal{D}}(\bm{\theta}_{\mathcal{N}})$ is the expected inference error over all possible data samples, it can quantify how good the learned function $\phi(\bm{x};\bm{\theta}_{\mathcal{N}})$ is. Note that
\setMathResizeRate{0.71} 
\begin{align}\label{eqn:gen}	
	\mathResize{   
		R_{\mathcal{D}}(\bm{\theta}_{\mathcal{N}})    
	} 
	& \mathResize{   
		 =\underbrace{[R_{\mathcal{D}}(\bm{\theta}_{\mathcal{N}})-R_{\mathcal{S}}(\bm{\theta}_{\mathcal{N}})]}_{\tn{GE}}
		+\underbrace{[R_{\mathcal{S}}(\bm{\theta}_{\mathcal{N}})-R_{\mathcal{S}}(\bm{\theta}_{\mathcal{S}})]}_{\tn{OE}}   
		+\underbrace{[R_{\mathcal{S}}(\bm{\theta}_{\mathcal{S}})-R_{\mathcal{S}}(\bm{\theta}_{\mathcal{D}})]}_{\tn{$\le 0$ by Eq. \eqref{eqn:emloss}}} +\underbrace{[R_{\mathcal{S}}(\bm{\theta}_{\mathcal{D}})-R_{\mathcal{D}}(\bm{\theta}_{\mathcal{D}})]}_{\tn{GE}}
		+\underbrace{R_{\mathcal{D}}(\bm{\theta}_{\mathcal{D}})}_{\tn{AE}}
	}
	\nonumber \\
	&\mathResize{  
		\le \underbrace{R_{\mathcal{D}}(\bm{\theta}_{\mathcal{D}})}_{\tn{\color{blue}Approximation error (AE)}} \ +\  \underbrace{[R_{\mathcal{S}}(\bm{\theta}_{\mathcal{N}})-R_{\mathcal{S}}(\bm{\theta}_{\mathcal{S}})]}_{\tn{\color{blue}Optimization error (OE)}}\  + \  \underbrace{[R_{\mathcal{D}}(\bm{\theta}_{\mathcal{N}})-R_{\mathcal{S}}(\bm{\theta}_{\mathcal{N}})]
			+[R_{\mathcal{S}}(\bm{\theta}_{\mathcal{D}})-R_{\mathcal{D}}(\bm{\theta}_{\mathcal{D}})]}_{\tn{\color{blue}Generalization error (GE)}},  
		}
\end{align}
where the inequality comes from the fact that $[R_{\mathcal{S}}(\bm{\theta}_{\mathcal{S}})-R_{\mathcal{S}}(\bm{\theta}_{\mathcal{D}})]\leq 0$ since $\bm{\theta}_{\mathcal{S}}$ is a global minimizer of $R_{\mathcal{S}}(\bm{\theta})$. Constructive approximation provides an upper bound of $R_{\mathcal{D}}(\bm{\theta}_{\mathcal{D}})$ in terms of the network size, e.g., in terms of the network width and depth, or in terms of the number of parameters. The second term of Equation~\eqref{eqn:gen} is bounded by the optimization error of the numerical algorithm applied to solve the empirical loss minimization problem in Equation~\eqref{eqn:emloss}. Note that one only needs to make  $R_{\mathcal{S}}(\bm{\theta}_{\mathcal{N}})-R_{\mathcal{S}}(\bm{\theta}_{\mathcal{S}})$ small, but not $\bmtheta_\calN-\bmtheta_\calS$.
The study of the bounds for the third and fourth terms is referred to as the generalization error analysis of neural networks. See Figure~\ref{fig:AEOEGE} for the intuitions of these three errors.

One of the key targets in  the area of deep learning is to develop algorithms to reduce  $R_\calD{(\bmtheta_\calN)}$.
The constructive approximation established in this paper and in the literature provides upper bounds of the approximation  error $R_{\mathcal{D}}(\bm{\theta}_{\mathcal{D}})$ for several function spaces, which is crucial to estimate an upper bound of $R_\calD{(\bmtheta_\calN)}$.   Instead of deriving an approximator to attain  the approximation error bound,  deep learning algorithms aim  to identify a solution $\phi(\bm{x};\bm{\theta}_{\mathcal{N}})$ reducing the generalization and optimization errors in Equation~\eqref{eqn:gen}.  Solutions minimizing both generalization and optimization errors will lead to a good solution only if we also have a good upper bound estimate of $R_{\mathcal{D}}(\bm{\theta}_{\mathcal{D}})$ as shown in Equation~\eqref{eqn:gen}.   Independent of whether our analysis here  leads to a good approximator, which is an interesting topic to pursue,  the  theory here does provide a key ingredient in the error analysis of deep learning algorithms.

\begin{figure}
	\centering
	\includegraphics[width=0.86\textwidth]{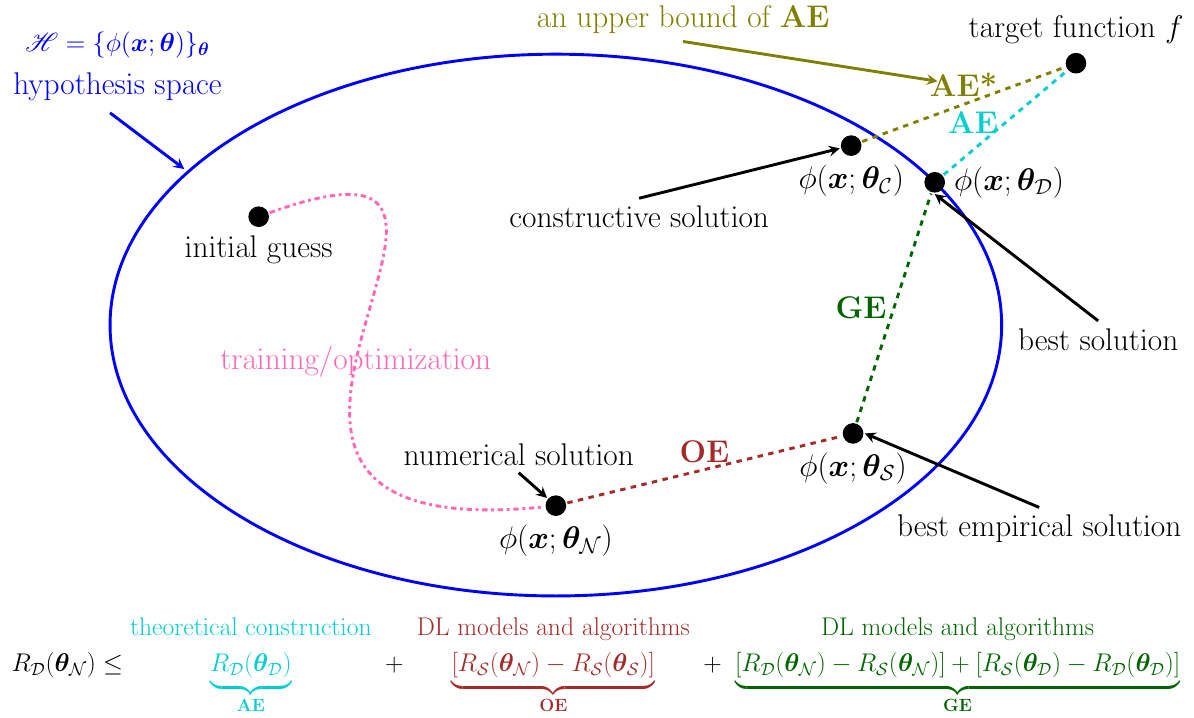}	
	\caption{The intuitions of the approximation error (AE), the optimization error (OE), and the generalization error (GE). DL is short of deep learning. One needs to control AE, OE, and GE in order to bound the discrepancy between {the target function $f$} and {the numerical solution $\phi(\bmx;\bmtheta_\calN)$ (what we can get in practice)}, measured by $\displaystyle R_{\mathcal{D}}(\bm{\theta}_{\mathcal{N}})=\mathbb{E}_{\bm{x}\sim U(\mathcal{X})} \left[\ell( {\phi(\bm{x};\bm{\theta}_\calN)},{f(\bm{x})})\right]$.} 
	\label{fig:AEOEGE}
\end{figure}

Theorem~\ref{thm:main} and Corollary~\ref{coro:main} provide an upper bound of $R_{\mathcal{D}}(\bm{\theta}_{\mathcal{D}})$. This bound only depends on the given budget of neurons {and layers} of FLES networks. Hence, this bound is independent of the empirical loss minimization in Equation~\eqref{eqn:emloss} and the optimization algorithm used to compute the numerical solution of Equation~\eqref{eqn:emloss}. In other words, Theorem~\ref{thm:main} and Corollary~\ref{coro:main} quantify the approximation power of FLES networks with a given size. Designing efficient optimization algorithms and analyzing the generalization bounds for FLES networks are two other separate future directions.

\black{
\subsection{Connection between approximation error and VC-dimension}\label{sec:approx:vcdim}

The approximation error and the Vapnik-Chervonenkis (VC) dimension are two important measures of the capacity (complexity) of a set of functions.
In this section, we discuss the connection between them.

Let us first present the definitions of VC-dimension and related concepts. Assume $H$ is a class of functions mapping from a general domain $\mathcal{X}$ to $\{0,1\}$. 
We say $H$ shatters a set of points $\{\bmx_1,\bmx_2,\cdots,\bmx_m\}\subseteq \mathcal{X}$ if
\begin{equation*}
\Big| \Big\{\big[h(\bmx_1),h(\bmx_2),\cdots,h(\bmx_m)\big]^T\in \{0,1\}^m: h\in H\Big\}\Big|=2^m,
\end{equation*}
where $|\cdot|$ means the size of a set. The above equation means, given any $\theta_i\in \{0,1\}$ for $i=1,2,\cdots,m$, there exists $h\in H$ such that
$h(\bmx_i)=\theta_i$ for all $i$. For a general function set $\scrF$ with its elements mapping from $\mathcal{X}$ to $\R$, we say $\scrF$ shatters $\{\bmx_1,\bmx_2,\cdots,\bmx_m\}\subseteq \mathcal{X}$ if $\calT\circ \scrF$ does, where \begin{equation*}
		\calT(t)\coloneqq \genfrac{\{}{.}{0pt}{0}{1,\ t\ge 0,}{0, \ t< 0\phantom{,}} \quad \tn{and}\quad \calT\circ \scrF\coloneqq \{\calT\circ f: f\in \scrF\}.
\end{equation*}

For any $m\in \N^+$, the growth function of $H$ is defined as 
\begin{equation*}
\Pi_H(m)\coloneqq \max_{\bmx_1,\bmx_2,\cdots,\bmx_m\in \mathcal{X}} \Big| \Big\{\big[h(\bmx_1),h(\bmx_2),\cdots,h(\bmx_m)\big]^T\in \{0,1\}^m: h\in H\Big\}\Big|.
\end{equation*}
	
\begin{definition}[VC-dimension]
Assume $H$ is a class of functions from $\mathcal{X}$ to $\{0,1\}$.
The VC-dimension of $H$, denoted by  $\vcd(H)$, is the size of the largest shattered set, namely, 
\begin{equation*}
    \vcd(H)\coloneqq \sup \big\{ m\in\N^+ : \Pi_H(m)=2^m\big\}
\end{equation*}
in the case that $\{m\in\N^+:\Pi_H(m)=2^m \}$ is not empty. If $\{m\in\N^+:\Pi_H(m)=2^m \}=\emptyset$, we may define $\vcd(H)=0$. 
	
		Let $\scrF$ be a class of functions from $\mathcal{X}$ to $\R$. The VC-dimension of $\scrF$, denoted by $\vcd(\scrF)$, is defined by $\vcd(\scrF)\coloneqq\vcd(\calT\circ\scrF)$,\footnote{One may also define $\vcd(\scrF)\coloneqq\vcd(\widehat{\calT}\circ\scrF)$, where $\widehat{\calT}(t)\coloneqq \genfrac{\{}{.}{0pt}{1}{1,\ t> 0,}{0, \ t\le 0\phantom{,}}$.}
		where
	\begin{equation*}
	    \calT(t)\coloneqq \genfrac{\{}{.}{0pt}{0}{1,\ t\ge 0,}{0, \ t< 0\phantom{,}} \quad \tn{and}\quad \calT\circ \scrF\coloneqq \{\calT\circ f: f\in \scrF\}.
	\end{equation*}
	In particular, the expression ``the VC-dimension of a network (architecture)'' means the VC-dimension of the function set that consists of all functions computed by this network (architecture).
\end{definition}

As shown in \cite{yarotsky18a,yarotsky2017,Shen1,Shen2,Shen3,Shen4,shijun:thesis,2021arXiv210300502S}, VC-dimension essentially determines the lower bound of the approximation errors of networks.
For simplicity, we use $\holder([0,1]^d,\alpha,\lambda)$ as an example, where $\holder([0,1]^d,\alpha,\lambda)$ denotes the space of H\"older continuous functions of order $\alpha\in (0,1]$ and a H\"older constant $\lambda>0$. 
Without loss of generality, we assume $\lambda=1$.
Theorem~\ref{thm:linkVcdRate} below  shows that the best possible approximation error of functions in $\holder([0,1]^d,\alpha,1)$ approximated by functions in $\scrF$ is bounded by a formula characterized by $\vcd(\scrF)$.

\begin{theorem}[Theorem 2.4 of \cite{2021arXiv210300502S} or Theorem $4.17$ of \cite{shijun:thesis} ]
	\label{thm:linkVcdRate}
	Assume $\scrF$ is a function set  with all elements defined on $[0,1]^d$. Given any $\varepsilon>0$, suppose $\vcd(\scrF)\ge 1$ and
	\begin{equation*}
		\inf_{\phi\in \scrF}\|\phi-f\|_{L^\infty([0,1]^d)}\le \varepsilon,\quad \tn{for any $f\in \holder([0,1]^d,\alpha,1)$.}
	\end{equation*}
	Then $\vcd(\scrF)\ge (9\varepsilon)^{-d/\alpha}$.
\end{theorem}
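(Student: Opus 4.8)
The plan is to convert the approximation hypothesis into a \emph{shattering} statement. I will exhibit a set of $m$ points in $[0,1]^d$ that is shattered by $\calT\circ\scrF$; by the definition $\vcd(\scrF)=\vcd(\calT\circ\scrF)$ this immediately yields $\vcd(\scrF)\ge m$, and I will then choose $m$ as large as the H\"older constraint allows. The mechanism is a family of test functions indexed by binary patterns: for every prescribed pattern on a grid I build a function in $\holder([0,1]^d,\alpha,1)$ whose values at the grid points are $+2\varepsilon$ or $-2\varepsilon$ according to the pattern, so that \emph{any} $\varepsilon$-approximant from $\scrF$ must reproduce that pattern after thresholding by $\calT$. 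Since the pattern is arbitrary, this is precisely shattering.

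First I would dispose of the easy regime: if $\varepsilon>\tfrac14$ then $(9\varepsilon)^{-d/\alpha}<1\le\vcd(\scrF)$ and the claim is trivial, so assume $\varepsilon\le\tfrac14$. Set $K=\lfloor(4\varepsilon)^{-1/\alpha}\rfloor\ge1$ and $h=1/K$, partition $[0,1]^d$ into the $m=K^d$ subcubes of side $h$, and let $\bmx_1,\dots,\bmx_m$ be their centers. Using the $\|\cdot\|_\infty$-Lipschitz bump $\beta(\bm{u})=\max\{0,1-2\|\bm{u}\|_\infty\}$, define $b_i(\bmx)=2\varepsilon\,\beta\big((\bmx-\bmx_i)/h\big)$, supported in the $i$-th closed subcube with peak $2\varepsilon$ at $\bmx_i$. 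For a pattern $\bmtheta\in\{0,1\}^m$ put $f_{\bmtheta}=\sum_{i=1}^m(2\theta_i-1)\,b_i$. Because the $b_i$ have disjoint interiors and vanish on subcube boundaries, $f_{\bmtheta}$ is globally Lipschitz with constant $4\varepsilon/h$ (telescoping along a segment through the crossed cubes), and interpolating between this Lipschitz bound on short scales and the bound $\|f_{\bmtheta}\|_\infty\le2\varepsilon$ on long scales gives H\"older seminorm at most $4\varepsilon h^{-\alpha}=4\varepsilon K^{\alpha}\le1$, where the last inequality uses $K\le(4\varepsilon)^{-1/\alpha}$. Hence $f_{\bmtheta}\in\holder([0,1]^d,\alpha,1)$, while $f_{\bmtheta}(\bmx_i)=(2\theta_i-1)2\varepsilon$ since every other bump vanishes at $\bmx_i$.

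Next, the hypothesis supplies $\phi_{\bmtheta}\in\scrF$ with $\|\phi_{\bmtheta}-f_{\bmtheta}\|_{L^\infty}\le\varepsilon$, so $\phi_{\bmtheta}(\bmx_i)\ge\varepsilon>0$ when $\theta_i=1$ and $\phi_{\bmtheta}(\bmx_i)\le-\varepsilon<0$ when $\theta_i=0$; that is, $\calT(\phi_{\bmtheta}(\bmx_i))=\theta_i$ for every $i$. As $\bmtheta$ ranges over $\{0,1\}^m$ this shows $\calT\circ\scrF$ shatters $\{\bmx_1,\dots,\bmx_m\}$, whence $\vcd(\scrF)=\vcd(\calT\circ\scrF)\ge m=K^d$. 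It remains to extract the stated constant: I would verify $\lfloor t\rfloor\ge\tfrac49 t$ for all $t\ge1$ and combine it with $(4/9)^{1/\alpha}\le4/9$ (valid since $1/\alpha\ge1$) to obtain $K\ge\tfrac49(4\varepsilon)^{-1/\alpha}\ge(9\varepsilon)^{-1/\alpha}$, hence $\vcd(\scrF)\ge K^d\ge(9\varepsilon)^{-d/\alpha}$.

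The main obstacle is the test-function construction, which must simultaneously (i) stay inside the prescribed H\"older ball of constant exactly $1$, (ii) carry amplitude $2\varepsilon$ comfortably above the tolerance $\varepsilon$ so that thresholding recovers the sign, and (iii) decouple across grid points so a flip at one point is independent of the rest. The disjoint-support bumps give (iii) for free and reduce the H\"older estimate to a single-bump computation; the delicate part is making that seminorm bound tight enough that the grid count $K^d$ matches the target rate rather than a weaker power. Tracking the constants through the floor of $(4\varepsilon)^{-1/\alpha}$ together with the range $\alpha\in(0,1]$ is exactly what upgrades the natural constant $4$ into the advertised $9$.
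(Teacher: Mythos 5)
First, a point of order: this paper never proves Theorem~\ref{thm:linkVcdRate}; the statement is imported from Theorem 2.4 of \cite{2021arXiv210300502S} (Theorem 4.17 of \cite{shijun:thesis}), so there is no in-paper proof to compare yours against. Judged on its own merits, your argument is correct in substance, and it is the same kind of shattering argument used in those sources: a grid of $m=K^d$ cubes with $K=\lfloor(4\varepsilon)^{-1/\alpha}\rfloor$, one disjointly supported bump of height $2\varepsilon$ per cube with signs prescribed by an arbitrary pattern $\bmtheta\in\{0,1\}^m$, membership in $\holder([0,1]^d,\alpha,1)$ via the Lipschitz/sup-norm interpolation $4\varepsilon h^{-\alpha}\le 1$ (and your telescoping Lipschitz bound across cubes is the right way to glue, since H\"older bounds do not telescope for $\alpha<1$), and thresholding of the approximants by $\calT$ to recover the pattern at the centers, which is precisely shattering. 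Your constant chase ($\lfloor t\rfloor\ge\tfrac49 t$ for $t\ge1$ and $(4/9)^{1/\alpha}\le 4/9$ since $1/\alpha\ge 1$) correctly upgrades the natural $4$ to the advertised $9$, and the regime $\varepsilon>\tfrac14$ is correctly dispatched using the hypothesis $\vcd(\scrF)\ge1$.

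Two technical points should be repaired in a final write-up, though neither threatens the structure. (i) The hypothesis is an infimum, $\inf_{\phi\in\scrF}\|\phi-f_{\bmtheta}\|_{L^\infty([0,1]^d)}\le\varepsilon$, so you may only select $\phi_{\bmtheta}$ with error at most $\varepsilon+\delta$; your sign argument has slack, so any sufficiently small $\delta$ works, but you should say this rather than assume the infimum is attained. (ii) More substantively, if $\|\cdot\|_{L^\infty([0,1]^d)}$ is read as the essential supremum --- and elements of $\scrF$ need not be continuous --- then it does not control the values $\phi_{\bmtheta}(\bmx_i)$ at your finitely many fixed centers, which is exactly where you evaluate. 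A cheap repair keeps all your constants: for each $\bmtheta$ the set where $|\phi_{\bmtheta}-f_{\bmtheta}|>\varepsilon+\delta$ is null, hence so is its union over the $2^m$ patterns; since each $f_{\bmtheta}$ is $(4\varepsilon/h)$-Lipschitz, any point $\bmy_i$ chosen within $\|\cdot\|_\infty$-distance $h/8$ of $\bmx_i$ and outside that null set satisfies $(2\theta_i-1)f_{\bmtheta}(\bmy_i)\ge \tfrac32\varepsilon$ for every $\bmtheta$ simultaneously, so the thresholding argument goes through verbatim at $\{\bmy_1,\dots,\bmy_m\}$ in place of the centers (take $\delta<\varepsilon/2$). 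With these two sentences added, your proof is complete.
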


This theorem investigates the connection between VC-dimension of $\scrF$ and the approximation errors of functions in $\holder([0,1]^d,\alpha,1)$ approximated by elements of $\scrF$. 
Denote the best approximation error of functions in $\holder([0,1]^d,\alpha,1)$ approximated by the elements of $\scrF$  as 
\begin{equation*}
	\calE_{\alpha,d}(\scrF) \coloneqq \sup_{f\in \holder([0,1]^d,\alpha,1)} \Big(\inf_{\phi\in \scrF } \|\phi-f\|_{L^\infty([0,1]^d)}\Big).
\end{equation*}
Then, Theorem~\ref{thm:linkVcdRate} implies that 
\begin{equation*}
	\vcd(\scrF)^{-\alpha/d}\big/9 \   \le \    \calE_{\alpha,d}(\scrF),
\end{equation*} 
which means that the best possible approximation error is controlled by $\vcd(\scrF)^{-\alpha/d}/9$.
A typical application of this theorem is to prove the optimality of approximation errors when using ReLU networks to approximate functions in $\holder([0,1]^d,\alpha,1)$. 
It is shown in \cite{pmlr-v65-harvey17a} that 
the VC-dimension of $\scrF_{N,L}$ is bounded by 
\[\vcd(\scrF_{N,L})
\le \calO\big(N^2L\cdot L \cdot\ln(N^2L)\big)
\le\calO\big(N^2L^2\ln(NL)\big),\]
 where  $\scrF_{N,L}$ is the space consisting of all functions implemented by ReLU networks with width $N$ and depth $L$.
 It is shown in Section $4.4.1$ of \cite{shijun:thesis} that
\begin{equation*}
	C_1(\alpha,d)\cdot \Big(N^2L^2 {\ln (NL)}\Big)^{-\alpha/d}
	\ \le\ 
	\calE_{\alpha,d}(\scrF_{N,L}) 
	\ \le\  
	C_2(\alpha,d)\cdot\Big({N^2L^2}\Big)^{-\alpha/d},
\end{equation*}
where $C_1(\alpha,d)$ and $C_2(\alpha,d)$ are two positive constants determined by $\alpha$ and $d$.

Finally, we would like to point out that a {large} VC-dimension of the hypothesis space $\scrF$ is a \textbf{necessary} condition of a {good} approximation error, but cannot guarantee a good approximation error, which also relies on other properties of the hypothesis space $\scrF$. For example, it is easy to check by Proposition~\ref{prop:bitsExtract:cos} that 
\begin{equation*}
	\vcd\Big(\big\{\phi: \phi(x)=\cos(ax),\  a\in\R \big\}\Big)=\infty.
\end{equation*}
However, $\big\{\phi: \phi(x)=\cos(ax),\  a\in\R \big\}$ cannot achieve a good approximation error when approximating H\"older continuous functions. Designing a hypothesis space with a large VC-dimension is the first step for a good approximation toll, but to realize the desired approximation power requires refined design of the hypothesise space, which is also the philosophy we followed in this paper. Our initial goal is to design a network architecture with a fixed depth (e.g., three hidden layers) to generate  a hypothesis space with a sufficiently large VC-dimension ($\infty$).  As we shall see later,  Proposition~\ref{prop:bitsExtract} implies that the VC-dimension of FLES networks is infinity,  which is a necessary condition for our FLES networks to attain super approximation power.
}

\subsection{Further interpretation of our theory}

In the interpretation of our theory, three more aspects are important to discuss. The first one is whether it is possible to extend our theory to functions on a more general domain, e.g, $E\subseteq [-R, R]^d$ for any $R>0$, because $R>1$ may cause an implicit curse of dimensionality in some existing theory. The second one is how bad the modulus of continuity would be since it is related to a high-dimensional function $f$ that may lead to an implicit curse of dimensionality in our approximation rate. \black{The last one is the discussion of overcoming the  zero derivative  in training FLES networks.}

\black{First, we can generalize Theorem~\ref{thm:main} to the function space $C(E)$ with $E\subseteq [-R,R]^d$ for any $R>0$ in the following corollary with the modulus of continuity $\omega_f^{E}(\cdot)$ defined as follows. For an arbitrary set $E\subseteq\R^d$,
$\omega_f^E(r)$ is defined via 
\[\omega_f^E(r)\coloneqq  \sup\big\{|f(\bmx)-f(\bmy)|: \|\bmx-\bmy\|_2\le r,\ \bmx,\bmy\in E\big\},\quad \tn{ for any $r\ge0$.}\] 
As defined earlier, in the case $E=[0,1]^d$, $\omega_f^{E}(r)$ is abbreviated to $\omega_f(r)$. The proof of this corollary will be presented in Section~\ref{sec:proof:main}.
	
\begin{corollary}\label{cor:subsetE}
	Given an arbitrary bounded continuous function $f$ on $E\subseteq [-R,R]^d$ where $R$ is an arbitrary positive real number, for  any $N\in \N^+$, there exist $a_1,a_2,\cdots,a_ N\in[0,\tfrac12)$ such that
	\begin{equation*}
		|\phi(\bmx)-f(\bmx)|\le 2\omega_f^{E}(3R\sqrt{d})2^{-N}+\omega_{f}^{E}(3R\sqrt{d}\,2^{-N}),
	\end{equation*}
	for any $\bmx=(x_1,x_2,\cdots,x_d)\in E$, where  $\phi$ is defined by a formula in $a_1,a_2,\cdots,a_N$ as follows.
	\begin{equation*}
	\phi(\bmx)=2\omega_f^{E}(3R\sqrt{d})\sum_{j=1}^ N 2^{-j} \sigma_3\bigg(\, a_j\cdot \sigma_2\Big(\, 1+\sum_{i=1}^{d} 2^{(i-1)N}\sigma_1(2^N\tfrac{x_i+R}{3R})\, \Big)\, \bigg)+C_f,
	\end{equation*}
	where $C_f$ is a constant determined by $f$.
\end{corollary}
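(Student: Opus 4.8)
\textbf{Proof proposal for Corollary~\ref{cor:subsetE}.}
The plan is to reduce the statement to the construction already carried out for Theorem~\ref{thm:main} by an affine change of coordinates, rather than re-deriving everything. Define the coordinatewise affine map $\Psi\colon[-R,R]^d\to[0,\tfrac23]^d$ by $\Psi(\bmx)_i=\tfrac{x_i+R}{3R}$. First I would observe that the factor $3R$ (as opposed to the naive $2R$, which would send $[-R,R]^d$ onto all of $[0,1]^d$) is chosen precisely so that the image lands in $[0,\tfrac23]^d$. This matters because the inner code $\sum_{i=1}^d 2^{(i-1)N}\sigma_1(2^N\tfrac{x_i+R}{3R})$ is a radix-$2^N$ encoding of the cell indices, and it is injective on cells only as long as each digit $\sigma_1(2^N\tfrac{x_i+R}{3R})=\lfloor 2^N\tfrac{x_i+R}{3R}\rfloor$ stays in $\{0,1,\dots,2^N-1\}$; since $2^N\cdot\tfrac23<2^N$, the shrink to $[0,\tfrac23]^d$ guarantees exactly this. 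Folding $\Psi$ into the first affine layer turns the theorem's digit map $\sigma_1(2^{N-1}x_i)$ into the stated $\sigma_1(2^N\tfrac{x_i+R}{3R})$, so the network is the same three-hidden-layer $(\sigma_1,\sigma_2,\sigma_3)$ architecture governed by the same $N$ parameters $a_1,\dots,a_N$.

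Next I would transfer the modulus of continuity. Because $\Psi$ scales all Euclidean distances by $1/(3R)$, for $g\coloneqq f\circ\Psi^{-1}$ on $\widetilde E\coloneqq\Psi(E)\subseteq[0,1]^d$ one has the exact identity $\omega_g^{\widetilde E}(r)=\omega_f^{E}(3Rr)$ for all $r\ge0$. The point I would stress is that the construction behind Theorem~\ref{thm:main} is purely sampling-based: it partitions the domain into the cells cut out by the floor digits, assigns to each cell meeting $E$ a representative point $\bmx_{\bm k}\in E$, and encodes the (quantized) values $f(\bmx_{\bm k})$ into $a_1,\dots,a_N$ via the bit-extraction mechanism underlying the construction (cf.\ Proposition~\ref{prop:bitsExtract}). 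Consequently the argument never evaluates $f$ outside $E$, and no continuous extension of $f$ to the full cube $[-R,R]^d$ is required; this is the subtlety that makes the statement true for an \emph{arbitrary} bounded set $E$ rather than only a closed one.

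With these two points in hand the error estimate is bookkeeping. For $\bmx\in E$ lying in a cell $Q_{\bm k}$, that cell has side $3R\,2^{-N}$ in the original coordinates, so $\|\bmx-\bmx_{\bm k}\|_2\le 3R\sqrt d\,2^{-N}$ with both points in $E$, giving the in-cell term $\omega_f^{E}(3R\sqrt d\,2^{-N})$. The oscillation of $f$ on $E$ is at most $\omega_f^{E}(\tn{diam}(E))\le\omega_f^{E}(2R\sqrt d)\le\omega_f^{E}(3R\sqrt d)$, so quantizing the representative values into $2^N$ levels inside an interval of length $2\omega_f^{E}(3R\sqrt d)$ costs at most $2\omega_f^{E}(3R\sqrt d)\,2^{-N}$; the constant $C_f$ (for instance $\inf_{\bmx\in E}f(\bmx)$, the analogue of $f(\bmzero)-\omega_f(2\sqrt d)$) merely recenters that interval onto the range of $f|_E$. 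Summing the two contributions through the triangle inequality yields $2\omega_f^{E}(3R\sqrt d)2^{-N}+\omega_f^{E}(3R\sqrt d\,2^{-N})$, matching the claim, and the shared constant $3R\sqrt d$ in both terms is exactly the per-coordinate scale $3R$ times the diameter factor $\sqrt d$.

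The main obstacle I anticipate is not the inequality itself but justifying the reduction cleanly: verifying that Theorem~\ref{thm:main}'s construction really depends on $f$ only through its values on $E$ (so that Tietze-type extension, which would not control $\omega_f$, is avoided), and pinning down the scaling factor $3$ so that the radix-$2^N$ cell encoding stays injective after the shift-and-rescale. Once those are settled, composing $\Psi$ into the first layer and propagating $\omega_g^{\widetilde E}(r)=\omega_f^{E}(3Rr)$ through the two error terms of Theorem~\ref{thm:main} completes the proof.
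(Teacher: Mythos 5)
Your proposal is correct, but it takes a genuinely different route from the paper on the one point that matters: how to handle a general bounded domain $E$. The paper never re-opens the construction of Theorem~\ref{thm:mainOld}. It first invokes Lemma 4.2 of \cite{Shen2}, an extension result producing $g\in C(\R^d)$ with $g=f$ on $E$ and $\omega_g^{\R^d}(r)=\omega_f^{E}(r)$ for all $r\ge 0$; so the concern you raise about extensions failing to control the modulus is resolved by a modulus-preserving (Whitney/McShane-type) lemma rather than avoided --- Tietze is never used, and the lemma applies to arbitrary $E$, not just closed sets. The paper then applies Theorem~\ref{thm:mainOld} as a black box to $\tildeg(\bmx)= g(3R\bmx-R)$, and pulls the estimate back through $\Psi(\bmx)=\tfrac{\bmx+R}{3R}$ using $\omega_{\tildeg}^{\R^d}(r)=\omega_f^{E}(3Rr)$ and $\Psi(E)\subseteq[0,\tfrac23]^d\subseteq[0,1)^d$. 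Your alternative --- re-running the cell construction with representatives $\bmx_\bmbeta\in E\cap Q_\bmbeta$, quantizing an interval of length $2\omega_f^{E}(3R\sqrt{d})$ containing the range of $f|_E$, and bit-extracting via Proposition~\ref{prop:bitsExtract} --- is also valid, and it buys self-containedness: it makes explicit that the parameters $a_1,\cdots,a_N$ depend on $f$ only through its values on $E$, with no extension lemma needed. What the paper's route buys is brevity and reuse: given Theorem~\ref{thm:mainOld}, the corollary is a few lines of bookkeeping.

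One caveat in your write-up: the reduction must go through Theorem~\ref{thm:mainOld} (the $[0,1)^d$ variant with digit map $\sigma_1(2^N x_i)$), not Theorem~\ref{thm:main}. Composing Theorem~\ref{thm:main}'s network literally with $\Psi$ would produce $\sigma_1(2^{N-1}\tfrac{x_i+R}{3R})$, i.e., cells of side $3R\,2^{-(N-1)}$ in the original coordinates, and hence a bound with constant $6R\sqrt{d}$ in place of $3R\sqrt{d}$ and a formula not matching the one stated in Corollary~\ref{cor:subsetE}; your sentence asserting that folding $\Psi$ into the first layer turns $\sigma_1(2^{N-1}x_i)$ into $\sigma_1(2^N\tfrac{x_i+R}{3R})$ silently absorbs this factor of two. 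Since your actual error analysis uses cells of side $3R\,2^{-N}$ (equivalently, digits $\lfloor 2^N\tfrac{x_i+R}{3R}\rfloor$), this is a citation slip rather than a mathematical gap, but it is precisely the reason the paper isolates the $[0,1)^d$ statement as a separate theorem before proving both Theorem~\ref{thm:main} and Corollary~\ref{cor:subsetE} from it.
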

	
	Hence, the volume of the function domain $E\subseteq [-R,R]^d$ only has a mild influence on the approximation rate of our FLES networks. FLES networks can still avoid the curse of dimensionality and achieve exponential convergence for continuous functions on $E\subseteq [-R,R]^d$ when $R>1$.  For example, in the case of H{\"o}lder continuous functions of order $\alpha\in (0,1]$ with a constant $\lambda>0$ on $E\subseteq [-R,R]^d$, our approximation rate becomes $3\lambda (3R\sqrt{d}\,{2^{-N}})^\alpha$. 
}

{Second, most interesting continuous functions in practice have a good modulus of continuity such that there is no implicit curse of dimensionality hidden in $\omega_f(\cdot)$. For example, we have discussed the case of H{\"o}lder continuous functions previously. We would like to remark that the class of H{\"o}lder continuous functions implicitly depends on $d$ through its definition in Equation~\eqref{eqn:Holder}, but this dependence is moderate since the $\ell^2$-norm in Equation~\eqref{eqn:Holder} is the square root of a sum with $d$ terms.  Let us now discuss several cases of $\omega_f(\cdot)$ when we cannot achieve exponential convergence or cannot avoid the curse of dimensionality. The first example is $\omega_f(r)=\tfrac{1}{\ln (1/r)}$ for small $r> 0$, which leads to an approximation rate
	\begin{equation*}
		3(N\ln 2-\tfrac12\ln d-\ln 2)^{-1},\quad \tn{for large $N\in\N^+$}.
	\end{equation*} 
	Apparently, the above approximation rate still avoids the curse of dimensionality but there is no exponential convergence, which has been canceled out by ``$\ln$'' in $\omega_f(\cdot)$. The second example is $\omega_f(r)=\tfrac{1}{\ln^{1/d} (1/r)}$ for  small $r> 0$, which leads to an approximation rate 
	\begin{equation*}
		3(N\ln 2-\tfrac12\ln d-\ln 2)^{-1/d},\quad \tn{for large $N\in\N^+$}.
	\end{equation*} 
	The power ${1}/{d}$ further weaken{s} the approximation rate and hence the curse of dimensionality exists. The last example we would like to discuss is $\omega_f(r)=r^{\alpha/d}$ for  small $r> 0$, which results in the approximation rate
	\[
	3\lambda (2\sqrt{d})^{{\alpha}/{d}}{2^{-{\alpha   N}/{d}}},\quad \tn{for large $N\in\N^+$}, 
	\]
	which achieves the exponential convergence and avoids the curse of dimensionality when we use very wide networks. 
	Though we have provided several examples of immoderate $\omega_f(\cdot)$, to the best of our knowledge, we are not aware of useful continuous functions with $\omega_f(\cdot)$ that is immoderate.}

\black{Finally, we would like to point out that the training of FLES networks in practice may encounter two issues. First, network weights in our main theorems require high-precision computation that might not be available in existing computer systems when the dimension $d$ and the network size parameter $N$ are large. But there is no theoretical evidence to exclude the possibility that similar approximation results can be achieved with reasonable weights in practical computation. Second, the vanishing gradient of piecewise constant activation functions makes standard SGD infeasible. There are two possible directions to solve the optimization problem for FLES networks: 1) gradient-free optimization methods, e.g., Nelder-Mead method \cite{Nelder:1965zz}, genetic algorithm \cite{10.2307/24939139}, simulated annealing \cite{Kirkpatrick671}, particle swarm optimization \cite{488968}, and consensus-based optimization \cite{doi:10.1142/S0218202517400061,Carrillo2019ACG}; 2) applying optimization algorithms for quantized networks that also have piecewise constant activation functions \cite{8955646,Boo2020QuantizedNN,2013arXiv1308.3432B,qnn2,qnn,Yin2019UnderstandingSE}. For example, an empirical way is to use a straight-through estimator (STE) by
setting the incoming gradients to the activation function (e.g., Floor) equal to its outgoing gradients, disregarding the derivative of the activation function itself. It would be interesting future work to explore efficient learning algorithms based on the FLES network.
}

\subsection{Kolmogorov-Arnold Superposition Theorem}

A closely related research topic is the Kolmogorov-Arnold representation theorem (KST) \cite{kolmogorov1956,arnold1957,kolmogorov1957} and its approximation in a form of modern neural networks. Our FLES networks admit super approximation power with a fixed number of layers for continuous functions and the KST exactly represent continuous functions using two hidden layers and $\calO(d)$ neurons. More specifically, given any $f\in C([0,1]^d)$, the KST shows that there exist continuous functions $\phi_q:\R\to \R$ and $\psi_{q,p}:[0,1]\to\R$ such that
\begin{equation}\label{eq:kst}
	f(\bmx)=\sum_{q=0}^{2d}\phi_q \bigg(\sum_{p=1}^{d} \psi_{q,p}(x_p)\bigg),\quad \tn{for any $\bmx=(x_1,\cdots,x_d)\in [0,1]^d$.}
\end{equation}
Note that the activation functions $\{\phi_q\}$ (also called outer functions) of the neural network in Equation~\eqref{eq:kst} have to depend on the target function $f$, though $\{\psi_{q,p}\}$ (also called inner functions) can be independent of $f$. The modulus of continuity of $\{\psi_{q,p}\}$ can be constructed such that they moderately depend on $d$, but the modulus of continuity of $\{\phi_q\}$ would be exponentially bad in $d$. In sum, the outer functions are too pathological such that there is no existing numerical algorithms to evaluate these activation functions, even though they are shown to exist by iterative construction \cite{braun2009}. 

There has been an active research line to develop more practical network approximation based on KST \cite{vera,kurkova1992, MAIOROV199981,GUL,MO,igelnik2003,schmidthieber2020kolmogorovarnold} by relaxing the exact representation to network approximation with an $\varepsilon$-error. The key issue these KST-related networks attempting to address is the $f$-dependency of the activation functions and the main goal is to construct neural networks conquering the curse of dimensionality in a more practical way computationally. The main idea of these variants is to apply computable activation functions independent of $f$ to construct neural networks to approximate the outer and inner functions of the KST, resulting in a larger network that can approximate a continuous function with the desired accuracy. Using this idea, the seminal work in \cite{kurkova1992} applied sigmoid activation functions and constructed two-hidden-layer networks to approximate $f\in C([0,1]^d)$. Though the activation functions are independent of $f$, the number of neurons scales exponentially in $d$ and the curse of dimensionality exists. Cubic-splines and piecewise linear functions have also been used to approximate the outer and inner functions of KST in \cite{igelnik2003,MO,schmidthieber2020kolmogorovarnold}, resulting in cubic-spline networks or deep ReLU networks to approximate $f\in C([0,1]^d)$. But the approximation bounds in these works still suffer from the curse of dimensionality unless $f$ has simple outer functions in the KST. It is still an open problem to characterize the class of functions with a moderate outer function in KST.

To the best of our knowledge, the most successful construction of neural networks with $f$-independent activation functions conquering the curse of dimensionality is in \cite{MAIOROV199981,GUL}, where a two-hidden-layer network with $\calO(d)$ neurons can approximate $f\in C([0,1]^d)$ within an arbitrary error $\varepsilon$. Let us briefly summerize their main ideas to obtain such an \black{exciting} result here. 1) Identify a dense and countable subset $\{u_k\}_{k=1}^\infty$ of $C([-1,1])$, e.g., polynomials with rational coefficients. 2) Construct an activation function $\varrho$ to ``store'' all $u_k(x)$ for $x\in[-1,1]$. For example, divide the domain of $\varrho(x)$ into countable pieces and each piece is a connected interval of length $2$ associated with a $u_k$. In particular, let $\varrho(x+4k+1)=a_k+b_k x+c_k u_k(x)$ for any $x\in [-1,1]$ with carefully chosen constants $a_k$, $b_k$, and $c_k$ such that $\varrho(x)$ can be a sigmoid function. 3) By construction, there exists a one-hidden-layer network with width $3$ and $\varrho(x)$ as the activation function to approximate any outer or inner function in KST with an arbitrary accuracy parameter $\delta$. Only the parameters of the one-hidden-layer network depend on the target function and accuracy. 4) Replace the inner and outer function in KST with these one-hidden-layer networks to achieve a two-hidden-layer network with $\varrho(x)$ as the activation function and width $\calO(d)$ to approximate an arbitrary $f\in C([0,1]^d)$ within an arbitrary error $\varepsilon$. Unfortunately, the construction of the parameters of this magic network relies on the evaluation of the outer and inner functions of KST, which is not computationally feasible even if computation with arbitrary precision is allowed. 

We would like to remark that, though the approximation rate of FLES networks in this paper is relatively worse than the approximation rate in \cite{MAIOROV199981,GUL}, our activation functions are much simpler and there are explicit formulas to specify the parameters of FLES networks. If computation with an arbitrary precision is allowed and the target function $f$ can be arbitrarily sampled, we can specify all the weights in FLES networks. Besides, our approximation rate is sufficiently attractive since it is exponential and avoids the curse of dimensionality. For a large dimension $d$, the width parameter of our FLES network can be chosen as $N=d$, which leads to a FLES network of size $\calO(d)$ with an approximation accuracy $\calO(2^{-d})$ for Lipschitz continuous functions. $\calO(2^{-d})$ is sufficiently attractive. In practice, when $d$ is very large, $N$ could be much smaller than $d$ and our approximation rate is still attractive.  

Finally, we list several KST-related results in Table~\ref{tab:kstLike} for a quick comparison.\footnote{The result in \cite{Shen1} is for H\"older functions, but can be easily generalized to general continuous functions. } 
\black{As shown in Table~\ref{tab:kstLike}, there exists a trade-off between the complexity of activation functions and the network size when the approximation error is fixed. A key advantage of our FLES networks is to use simple and  explicit activation functions to attain an exponential convergence rate.}

\begin{table}    
	\caption{ A comparison of several KST-related results for approximating $f\in C([0,1]^d)$.} 
	\label{tab:kstLike}
	\centering 
	\resizebox{0.985\textwidth}{!}{ %
		\begin{tabular}{cccccccc} 
			\toprule
			paper  & number of hidden layers & width &  activation function(s) & error & remark \\
			
			\midrule
			\cite{kolmogorov1956,arnold1957,kolmogorov1957}  & $2$ & $2d+1$ & 
			$f$-dependent &  0  & original KST\\
			
			\midrule
			\cite{MAIOROV199981,GUL}  & $2$ & $\calO(d)$ & 
			$f$-independent &  arbitrary error $\varepsilon$ & based on KST\\
			
			\midrule
			\cite{Shen1}  & $3$ & $\calO(dN)$ & 
			ReLU & $\calO(\omega_f(N^{-2/d}))$  & not based on KST\\
			
			\midrule
			this paper  & $3$ & $\max\{d,N\}$ & 
			($\sigma_1,\sigma_2,\sigma_3)$ & $2\omega_f(\sqrt{d})2^{-N}+\omega_{f}(\sqrt{d}\,2^{-N})$  & not based on KST\\
			
			\bottomrule
		\end{tabular} 
	}%
\end{table}

\subsection{Discussion on the literature}

In this section, we will discuss other recent development of neural network approximation. Our discussion will be divided into mainly three parts according to the analysis methodology in the references: 1) functions admitting integral representations; 2) linear approximation; 3) bit extraction.

In the seminal work of \cite{barron1993}, its variants or generalization \cite{barron2018approximation,Weinan2019,doi:10.1002/mma.5575,bandlimit}, and related references therein,  $d$-dimensional functions of the following form were considered:
\begin{equation*}
	f(\bmx)=\int_{\widetilde{\Omega}} a(\bm{w}) K(\bm{w}\cdot\bmx)d\mu(\bm{w}),
\end{equation*}
where $\widetilde{\Omega}\subseteq\mathbb{R}^d$, $\mu(\bm{w})$ is a Lebesgue measure in $\bm{w}$, and $\bm{x}\in \Omega\subseteq\mathbb{R}^d$. The above integral representation is equivalent to the expectation of a high-dimensional random function when $\bm{w}$ is treated as a random variable. By the law of large number theory, the average of $N$ samples of the integrand leads to an approximation of $f(\bmx)$ with an approximation error bounded by $\frac{C_f \sqrt{\mu(\Omega)}}{\sqrt{N}}$ measured in $L^2(\Omega,\mu)$ (Equation (6) of \cite{barron1993}), where $\calO(N)$ is the total number of parameters in the network, $C_f$ is a $d$-dimensional integral with an integran{d} related to $f$, and $\mu(\Omega)$ is the Lebesgue measure of $\Omega$. As discussed in \cite{barron1993}, $\mu(\Omega)$ and $C_f$ would be exponential in $d$ and standard smoothness properties of $f$ alone are not enough to remove the exponential dependence of $C_f$ on $d$. Therefore, the curse of dimensionality exists in the whole approximation error while the curse does not exist in the approximation rate in $N$.


Linear approximation is an efficient approximation tool for smooth functions that computes the approximant of a target function via a linear projection to a Hilbert space or a Banach space as the approximant space. Typical examples include approximation via orthogonal polynomials, Fourier series expansion, etc. Inspired by the seminal work in \cite{yarotsky2017}, where deep ReLU networks were constructed to approximate polynomials with exponential convergence, subsequent works in \cite{EWang,Opschoor2019,Hadrien,doi:10.1002/mma.5575,bandlimit,yarotsky2019,Shen3,MO,Wang} have constructed deep ReLU networks to approximate various smooth function spaces. The main idea of these works is to approximate smooth functions via (piecewise) polynomial approximation first and then construct deep ReLU networks to approximate the ensemble of polynomials. 
But the curse of dimensionality exists since polynomial approximation cannot avoid the curse. Finally, a different approach is used  in   \cite{2019arXiv191210382L}. The authors  of \cite{2019arXiv191210382L} use   a dynamic system based approach to obtain a universal approximation property of residual neural networks.

The bit extraction proposed in   \cite{Bartlett98almostlinear} has been a very important technique to develop nearly optimal approximation rates of deep ReLU neural networks \cite{yarotsky18a,Shen2,Shen3,Wang,shijun:thesis,2021arXiv210300502S} and the optimality is based on the nearly optimal VC-dimension bound of ReLU networks in \cite{pmlr-v65-harvey17a}. The bit extraction was also applied in \cite{Shen4,schmidthieber2020kolmogorovarnold} and this paper to develop network approximation theories. In the first step, an efficient projection map in a form of a ReLU, or a Floor-ReLU, or a FLES network is constructed to project high-dimensional points to one-dimensional points such that the high-dimensional approximation problem is reduced to a one-dimensional approximation problem.  In the second step, the one-dimensional approximation problem is solved by constructing a ReLU, or a Floor-ReLU, or a FLES network, which can be efficiently compressed via the bit extraction. Although shallower neural networks can also carry out the above two steps, bit extraction can take full advantage of the power of depth and construct deep neural networks with a nearly optimal number of parameters or neurons to fulfill the above two steps.

\section{Theoretical Analysis}
\label{sec:approxContFunc}
In this section, we first introduce basic notations in this paper in Section~\ref{sec:notation}. Then we prove Theorem~\ref{thm:main} and Corollary~\ref{cor:subsetE} in Section~\ref{sec:proof:main} based on Theorem~\ref{thm:mainOld}, which is proved in Section~\ref{sec:proof:main:old}.

\subsection{Notations}
\label{sec:notation}

The main notations of this paper are listed as follows.
\begin{itemize}    
	\item Vectors and matrices are denoted in a bold font. Standard vectorization is adopted in the matrix and vector computation. For example, adding a scalar and a vector means adding the scalar to each entry of the vector.

	\item Let $\R$  denote the set of real numbers.
	
	\item Let $\Z$, $\N$, and $\N^+$ denote the set of integers, natural numbers,  all positive integers, respectively, i.e., $\Z=\{0,1,2,\cdots\}\cup\{-1,-2,-3,\cdots\}$, $\N=\{0,1,2,\cdots\}$, and $\N^+=\{1,2,3,\cdots\}$.   
	
	\item For any $p\in [1,\infty)$, the $p$-norm of a vector $\bmx=(x_1,x_2,\cdots,x_d)\in\R^d$ is defined by 
	\begin{equation*}
		\|\bmx\|_p\coloneqq \big(|x_1|^p+|x_2|^p+\cdots+|x_d|^p\big)^{1/p}.
	\end{equation*}
	
	
	\item The floor function (Floor) is defined as $\lfloor x\rfloor:=\max \{n: n\le x,\ n\in \mathbb{Z}\}$ for any $x\in \R$. 
	
	
	
	\item For $\theta\in[0,1)$, suppose its binary representation is $\theta=\sum_{\ell=1}^{\infty}\theta_\ell2^{-\ell}$ with $\theta_\ell\in \{0,1\}$, we introduce a special notation $\bin 0.\theta_1\theta_2\cdots \theta_L$ to denote the $L$-term binary representation of $\theta$, i.e., ${\bin 0.\theta_1\theta_2\cdots \theta_L\coloneqq}\sum_{\ell=1}^{L}\theta_\ell2^{-\ell}$. 
	
	
	\item The expression ``a network with  width $N$ and  depth $L$'' means
	\begin{itemize}
		\item The maximum width of this network for all \textbf{hidden} layers  is no more than $N$.
		\item The number of \textbf{hidden} layers of this network is no more than $L$.
	\end{itemize}
\end{itemize}

\subsection{Proof of Theorem~\ref{thm:main}  and Corollary~\ref{cor:subsetE}}\label{sec:proof:main} 
In this section, we will prove Theorem~\ref{thm:main}  and Corollary~\ref{cor:subsetE}. 
To this end, we first introduce Theorem~\ref{thm:mainOld} that works only for $[0,1)^d$,  regraded as a weaker variant of Theorem~\ref{thm:main}.
\begin{theorem}\label{thm:mainOld}
	Given  an arbitrary continuous function $f$ on $[0,1]^d$, for any  $N\in \N^+$, there exist $a_1,a_2,\cdots,a_ N\in[0,\tfrac12)$ such that
	\begin{equation*}
		|\phi(\bmx)-f(\bmx)|\le 2\omega_f(\sqrt{d})2^{-N}+\omega_{f}(\sqrt{d}\,2^{-N}),
	\end{equation*}
	for any $\bmx=(x_1,x_2,\cdots,x_d)\in [0,1)^d$, where $\phi$ is defined by a formula in $a_1,a_2,\cdots,a_N$ as follows. 
	\begin{equation*}
			\mathResize[0.95]{
				\phi(\bmx)=2\omega_f(\sqrt{d})\sum_{j=1}^ N 2^{-j} \sigma_3\bigg(\, a_j\cdot\sigma_2\Big(\, 1+\sum_{i=1}^{d} 2^{(i-1)N}\sigma_1(2^N x_i)\, \Big)\, \bigg)+f(\bmzero)-\omega_f(\sqrt{d}).
			}
	\end{equation*}
\end{theorem}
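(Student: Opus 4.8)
The plan is to use the \emph{bit-extraction} paradigm. First I collapse the $d$-dimensional cube to a single integer index by quantizing each coordinate with the floor activation $\sigma_1$; then I store a quantized version of the target values in the binary expansions of the $N$ parameters $a_1,a_2,\dots,a_N$; finally I recover the stored value at each grid cell by reading off one bit, using the exponential activation $\sigma_2$ together with the periodic step activation $\sigma_3$. The essential point is that the composite index is injective on grid cells, so each cell is allotted its own disjoint block of bit positions inside each $a_j$, which lets a single real number $a_j$ carry the $j$-th bit of every cell's value simultaneously.

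For the construction, given $\bmx\in[0,1)^d$ set $m_i\coloneqq\sigma_1(2^Nx_i)=\lfloor 2^Nx_i\rfloor\in\{0,1,\dots,2^N-1\}$ and form $k\coloneqq\sum_{i=1}^d 2^{(i-1)N}m_i$. Since each $m_i<2^N$, the map $(m_1,\dots,m_d)\mapsto k$ is a bijection onto $\{0,1,\dots,2^{dN}-1\}$ (a base-$2^N$ encoding), so $k$ labels the grid cell $\prod_{i=1}^d[m_i2^{-N},(m_i+1)2^{-N})$ containing $\bmx$. I pick its corner $\bmx_k\coloneqq(m_12^{-N},\dots,m_d2^{-N})\in[0,1)^d$ as a representative, so that $\|\bmx-\bmx_k\|_2\le\sqrt{d}\,2^{-N}$ and $\|\bmx_k-\bmzero\|_2\le\sqrt{d}$. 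Assuming $\omega_f(\sqrt{d})>0$ (if $f$ is constant I just take all $a_j=0$), I define $t_k\coloneqq\tfrac12+\frac{f(\bmx_k)-f(\bmzero)}{2\omega_f(\sqrt{d})}\in[0,1]$, let $y_k=\bin 0.c^{(k)}_1c^{(k)}_2\cdots c^{(k)}_N\in[0,1)$ be an $N$-bit dyadic with $|y_k-t_k|\le 2^{-N}$, and finally pack the bits via $a_j\coloneqq\sum_{k=0}^{2^{dN}-1}c^{(k)}_j\,2^{-(k+2)}$, so that the bit of $a_j$ in position $k+2$ equals $c^{(k)}_j$; since position $1$ is always $0$ and the sum is finite, $a_j\in[0,\tfrac12)$.

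The key verification is the decoding identity. For $\bmx$ in cell $k$ one has $\sigma_2\big(1+\sum_{i=1}^d 2^{(i-1)N}\sigma_1(2^Nx_i)\big)=2^{1+k}$, and the fractional part of $a_j2^{1+k}$ equals $\bin 0.c^{(k)}_jc^{(k+1)}_j\cdots$, which is $\ge\tfrac12$ exactly when its leading bit $c^{(k)}_j$ is $1$ (finiteness of the expansion of $a_j$ excludes the borderline case where the trailing bits sum to $\tfrac12$). Hence $\sigma_3(a_j2^{1+k})=c^{(k)}_j$, and summing gives $\sum_{j=1}^N 2^{-j}\sigma_3(a_j2^{1+k})=y_k$, so $\phi$ is constant on each cell with $\phi(\bmx)=2\omega_f(\sqrt{d})(y_k-\tfrac12)+f(\bmzero)$. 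Because $f(\bmx_k)=2\omega_f(\sqrt{d})(t_k-\tfrac12)+f(\bmzero)$, a single triangle inequality then yields
\[
|\phi(\bmx)-f(\bmx)|\le 2\omega_f(\sqrt{d})\,|y_k-t_k|+|f(\bmx_k)-f(\bmx)|\le 2\omega_f(\sqrt{d})2^{-N}+\omega_f(\sqrt{d}\,2^{-N}),
\]
which is the claimed bound.

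I expect the main obstacle to be the bookkeeping inside the bit-extraction step: verifying that the base-$2^N$ index is genuinely injective so distinct cells occupy disjoint bit positions (no carries or collisions across cells within any $a_j$), and handling the exact-half boundary of $\sigma_3$ so the extracted bit is unambiguously $c^{(k)}_j$. Once this decoding identity is nailed down, the error estimate is routine.
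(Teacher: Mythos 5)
Your proposal is correct and follows essentially the same route as the paper's proof: quantize each coordinate with $\sigma_1$, map the cell index to a single integer via a base-$2^N$ encoding, pack one bit per cell into each parameter $a_j$, and decode with $\sigma_3(a_j\cdot\sigma_2(\cdot))$, exactly as in the paper's Proposition~\ref{prop:bitsExtract} (your inline verification of the decoding identity, including the handling of the half-integer boundary via the strict inequality for the trailing bits, matches the paper's argument with only an index shift, since you label cells from $0$ and absorb the ``$+1$'' into the exponent). The only differences are organizational --- the paper isolates the bit-extraction step as a standalone proposition and normalizes $f$ to a function $\tildef$ with values in $[0,1]$ before fitting, while you work with the shifted values $t_k$ directly --- and these are mathematically equivalent.
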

We will prove Theorem~\ref{thm:main}  and Corollary~\ref{cor:subsetE} based on Theorem~\ref{thm:mainOld}, the proof of which can be found in Section~\ref{sec:proof:main:old}.

\vspace{8pt}
First, let us prove Theorem~\ref{thm:main} by assuming Theorem~\ref{thm:mainOld} is true.
\begin{proof}[Proof of Theorem~\ref{thm:main}]
	Given any $f\in C([0,1]^d)$, by Lemma $4.2$ of \cite{Shen2} via setting $E=[0,1]^d$ and $S=\R^d$, there exists $g\in C(\R^d)$ such that 
	\begin{itemize}
		\item $g(\bmx)=f(\bmx)$ for any $\bmx\in E=[0,1]^d$;
		\item $\omega_g^{S}(r)=\omega_f^{E}(r)=\omega_f(r)$ for any $r\ge 0$.
	\end{itemize} 
	
	Define $\tildeg(\bmx)\coloneqq  g(2\bmx)$ for any $\bmx\in \R^d$.
	By applying Theorem~\ref{thm:mainOld} to $\tildeg\in C(\R^d)$, there exist $a_1,a_2,\cdots,a_ N\in[0,\tfrac12)$ such that
	\begin{equation}\label{eq:phi-g:one}
		|\tildephi(\bmx)-\tildeg(\bmx)|\le 2\omega_\tildeg^{S}(\sqrt{d})2^{-N}+\omega_{\tildeg}^{S}(\sqrt{d}\,2^{-N}),\quad \tn{for any $\bmx\in [0,1)^d$,}
	\end{equation}	
	where 
	\begin{equation*}
			\mathResize[0.95]{
				\tildephi(\bmx)=2\omega_\tildeg^S(\sqrt{d})\sum_{j=1}^ N 2^{-j} \sigma_3\bigg(\, a_j\cdot\sigma_2\Big(\, 1+\sum_{i=1}^{d} 2^{(i-1)N}\sigma_1(2^N x_i)\, \Big)\, \bigg)+\tildeg(\bmzero)-\omega_\tildeg^S(\sqrt{d}).
			}
	\end{equation*}

	Note that $f(\bmx)=g(\bmx)=\tildeg(\tfrac{\bmx}{2})$ for any $\bmx\in E=[0,1]^d$ and
	\[\omega_\tildeg^{S}(r)=\omega_g^{S}(2r)=\omega_f^E(2r)=\omega_f(2r),\quad \tn{ for any $r\ge 0$}.\]
	Define $\phi(\bmx)\coloneqq\tildephi(2\bmx)$ for any $\bmx\in\R^d$.
	 Then by Equation~\eqref{eq:phi-g:one}, for any $\bmx\in [0,1]^d=E$, we have $\tfrac{\bmx}{2}\in [0,\tfrac12]^d\subseteq [0,1)^d$, implying
	\begin{equation*}
		\begin{split}
			|\phi(\bmx)-f(\bmx)|=|\phi(\bmx)-g(\bmx)|  &=|\tildephi(\tfrac{\bmx}{2})-\tildeg(\tfrac{\bmx}{2})|\\
			&\le 2\omega_\tildeg^{S}(\sqrt{d})2^{-N}+\omega_{\tildeg}^{S}(\sqrt{d}\,2^{-N})\\
			&=2\omega_f(2\sqrt{d})2^{-N}+\omega_{f}(2\sqrt{d}\,2^{-N}),
		\end{split}
	\end{equation*}
	where $\phi(\bmx)\coloneqq \tildephi(\tfrac{\bmx}{2})$ can be represented by 
	\begin{equation*}
			\mathResize[0.95]{
			2\omega_f(2\sqrt{d})\sum_{j=1}^ N 2^{-j} \sigma_3\bigg(\, a_j\cdot\sigma_2\Big(\, 1+\sum_{i=1}^{d} 2^{(i-1)N}\sigma_1(2^{N-1}x_i)\, \Big)\, \bigg)
			+f(\bmzero)-\omega_f(2\sqrt{d}).
			}
	\end{equation*}
	With the discussion above, we have proved Theorem~\ref{thm:main}.
\end{proof}

Next, we present the proof of Corollary~\ref{cor:subsetE}  below.
\begin{proof}[Proof of Corollary~\ref{cor:subsetE}]
	Given any bounded continuous function $f\in C(E)$, by Lemma $4.2$ of \cite{Shen2} via setting $S=\R^d$, there exists $g\in C(\R^d)$ such that 
	\begin{itemize}
		\item $g(\bmx)=f(\bmx)$ for any $\bmx\in E\subseteq [-R,R]^d$;
		\item $\omega_g^{S}(r)=\omega_f^{E}(r)$ for any $r\ge 0$.
	\end{itemize} 
	
	Define 
	\[\tildeg(\bmx)\coloneqq   g(3R\bmx-R),\quad \tn{for any $\bmx\in \R^d$.}\] 
	By applying Theorem~\ref{thm:mainOld} to $\tildeg\in  C(\R^d)$, there exist $a_1,a_2,\cdots,a_ N\in[0,\tfrac12)$ such that
	\begin{equation}\label{eq:phi-g:two}
		|\tildephi(\bmx)-\tildeg(\bmx)|\le 2\omega_\tildeg^{S}(\sqrt{d})2^{-N}+\omega_{\tildeg}^{S}(\sqrt{d}\,2^{-N}),\quad \tn{for any $\bmx\in [0,1)^d$,}
	\end{equation}
	where 
	\begin{equation*}
			\mathResize[0.95]{
			\tildephi(\bmx)=2\omega_\tildeg^S(\sqrt{d})\sum_{j=1}^ N 2^{-j} \sigma_3\bigg(\, a_j\cdot\sigma_2\Big(\, 1+\sum_{i=1}^{d} 2^{(i-1)N}\sigma_1(2^{N} x_i)\, \Big)\, \bigg)+\tildeg(\bmzero)-\omega_\tildeg^S(\sqrt{d}).}
	\end{equation*}

	Note that  $f(\bmx)=g(\bmx)=\tildeg(\tfrac{\bmx+R}{3R})$ for any $\bmx\in E\subseteq [-R,R]^d$ and
	\[\omega_\tildeg^{S}(r)=\omega_g^{S}(3Rr)=\omega_f^E(3Rr),\quad \tn{ for any $r\ge 0$}.\]
	Define $\phi(\bmx)\coloneqq\tildephi(\tfrac{\bmx+R}{3R})$ for any $\bmx\in\R^d$.
	Then by Equation~\eqref{eq:phi-g:two}, for any $\bmx\in E\subseteq [-R,R]^d$, we have $\tfrac{\bmx+R}{3R}\in [0,\tfrac23]^d\subseteq [0,1)^d$, implying
	\begin{equation*}
		\begin{split}
			|\phi(\bmx)-f(\bmx)|=|\phi(\bmx)-g(\bmx)|  
			&=|\tildephi(\tfrac{\bmx+R}{3R})-\tildeg(\tfrac{\bmx+R}{3R})|\\
			&\le 2\omega_\tildeg^{S}(\sqrt{d})2^{-N}+\omega_{\tildeg}^{S}(\sqrt{d}\,2^{-N})\\
			&=2\omega_f(3R\sqrt{d})2^{-N}+\omega_{f}(3R\sqrt{d}\,2^{-N}),
		\end{split}
	\end{equation*}
	where $\phi(\bmx)=\tildephi(\tfrac{\bmx+R}{3R})$ can be represented by 
	\begin{equation*}		
			\mathResize[0.99]{
			2\omega_f(3R\sqrt{d})\sum\limits_{j=1}^ N 2^{-j} \sigma_3\bigg(\, a_j\cdot\sigma_2\Big(\, 1+\sum\limits_{i=1}^{d} 2^{(i-1)N}\sigma_1(2^N\tfrac{x_i+R}{3R})\, \Big)\, \bigg)
			+C_f,
			}
	\end{equation*}
	where $C_f=\tildeg(\bmzero)-\omega_\tildeg^S(\sqrt{d})$ is a constant essentially determined by $f$.
	With the discussion above, we have proved Corollary~\ref{cor:subsetE}.
\end{proof}

\subsection{Proof of Theorem~\ref{thm:mainOld}}\label{sec:proof:main:old}

To prove Theorem~\ref{thm:mainOld}, we first present the proof sketch.
Shortly speaking, we construct piecewise constant functions to approximate continuous functions. There are five key steps in our construction.
\begin{enumerate}
	\item Normalize $f$ as $\tildef$ satisfying $\tildef(\bmx)\in[0,1]$ for any $\bmx\in [0,1]^d$, divide $[0,1)^d$ into a set of non-overlapping cubes $\{Q_\bmbeta\}_{\bmbeta\in \{0,1,\cdots,J-1\}^d}$, and denote $\bmx_\bmbeta$ as the vertex of $Q_\bmbeta$ with minimum $\|\cdot\|_1$ norm,  where $J$ is an integer determined later. See Figure~\ref{fig:Qbeta+xbeta} for the illustrations of $Q_\bmbeta$ and $\bmx_\bmbeta$.
	
	\item Construct a vector-valued function $\bm{\Phi}_1:\R^d\to \R^d$ projecting  the whole cube $ Q_\bmbeta$ to the index $\bmbeta$, i.e., $\bmPhi_1(\bmx)=\bmbeta$ for all $\bmx\in Q_\bmbeta$ and each $\bmbeta\in \{0,1,\cdots,J-1\}^d$.

	\item Construct a linear function $\phi_2:\R^d\to \R$ bijectively mapping $\bmbeta\in\{0,1,\cdots,J-1\}^d$ to $\phi_2(\bmbeta)\in \{1,2,\cdots,J^d\}$.
	
	\item Construct a function $\phi_3:\R\to \R$ mapping $\phi_2(\bmbeta)\in\{1,2,\cdots,J^d\}$ approximately to $\tildef(\bmx_\bmbeta)$, i.e., $\phi_3(\phi_2(\bmbeta))\approx\tildef(\bmx_\bmbeta)$ for each $\bmbeta\in \{0,1,\cdots,J-1\}^d$.
	
	\item Define $\tildephi\coloneqq \phi_3\circ\phi_2\circ\bm{\Phi}_1$. Then $\tildephi$ is a piecewise constant function mapping $\bmx\in Q_\bmbeta$ to $\phi_3(\phi_2(\bmbeta))\approx \tildef(\bmx_\bmbeta)$ for each $\bmbeta\in \{0,1,\cdots,J-1\}^d$, implying $\tildephi\approx \tildef$. Finally, re-scale and shift $\tildephi$ to obtain the final function $\phi$ approximating $f$ well.
\end{enumerate}

Recall that
\begin{equation*}
	\sigma_1(x)\coloneqq\lfloor x\rfloor,\quad \sigma_2(x)\coloneqq 2^x,\quad \tn{and}\quad \sigma_3(x)\coloneqq\calT(x-\lfloor x\rfloor-\tfrac12),\quad \tn{for any $x\in \R$,}
\end{equation*}
where \[\calT(x)\coloneqq \one_{x\geq 0} =
\left\{\genfrac{}{}{0pt}{0}{
	1,\   x\ge 0,}
	{0,\  x<0,}
	\right.\quad \tn{ for any $x\in \R.$}\]
Step $1$ and $5$ are straightforward. To implement Step $2$, we introduce $\sigma_1$ since it can help to significantly simplify the construction of the vector-valued projecting function $\bmPhi_1$. The implementation of Step $3$ is based on the $J$-ary representation, namely, define $\phi_2(\bmx)\coloneqq 1+\sum_{i=1}^{d}J^{i-1} x_i$.
The most technical step above is Step $4$, which is essentially a point fitting problem. Solving such a problem  eventually relies on  the bit extraction technique in \cite{Shen2,Shen3,Shen4,pmlr-v65-harvey17a,Bartlett98almostlinear,shijun:thesis,yarotsky18a}.
To extract sufficient many bits with a limited neuron budget, we introduce two powerful activation functions $\sigma_2$ and $\sigma_3$,
as shown in the proposition below.
\begin{proposition}
	\label{prop:bitsExtract}
	Given any $K\in \N^+$ and arbitrary $\theta_1,\theta_2,\cdots,\theta_K\in \{0,1\}$, it holds that
	\begin{equation*}
		\sigma_3\big(a\cdot \sigma_2(k)\big)=\sigma_3(2^k\cdot a)=\theta_k,\quad \tn{for any $k\in\{1,2,\cdots,K\}$,}
	\end{equation*}
	where 
	\begin{equation*}
		a=\sum_{j=1}^{K} 2^{-j-1}\cdot\theta_j\ \in [0,\tfrac12).
	\end{equation*}
\end{proposition}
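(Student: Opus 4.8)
The plan is to reduce everything to reading off a single binary digit. First I would dispose of the left-hand equality: since $\sigma_2(k)=2^k$ by definition, $a\cdot\sigma_2(k)=2^k a$, so the only substantive claim is $\sigma_3(2^k a)=\theta_k$. Recalling that $\sigma_3(x)=\calT\big(x-\lfloor x\rfloor-\tfrac12\big)$ with $\calT(t)=\one_{t\ge0}$, the number $\sigma_3(2^k a)$ equals $1$ exactly when the fractional part $2^k a-\lfloor 2^k a\rfloor$ is at least $\tfrac12$, and equals $0$ otherwise. Hence the whole proposition is equivalent to locating the fractional part of $2^k a$.

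Next I would expand
\[
2^k a=\sum_{j=1}^{K}\theta_j\,2^{\,k-j-1}
\]
and split the sum at $j=k$. The terms with $j\le k-1$ have exponent $k-j-1\ge0$, so they are non-negative integers and contribute only to $\lfloor 2^k a\rfloor$; the term $j=k$ contributes exactly $\theta_k\,2^{-1}$; and the terms with $j\ge k+1$ form a tail. Writing $r_k\coloneqq\theta_k\,2^{-1}+\sum_{j=k+1}^{K}\theta_j\,2^{\,k-j-1}$, I would then argue that $r_k$ is genuinely the fractional part of $2^k a$, i.e.\ that $r_k\in[0,1)$ while $\lfloor 2^k a\rfloor=\sum_{j=1}^{k-1}\theta_j 2^{\,k-j-1}$.

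The one estimate that needs care is the tail bound: using $\theta_j\le1$ and comparing with a geometric series,
\[
\sum_{j=k+1}^{K}\theta_j\,2^{\,k-j-1}\ \le\ \sum_{j=k+1}^{K}2^{\,k-j-1}\ <\ \sum_{j=k+1}^{\infty}2^{\,k-j-1}=\tfrac12 .
\]
Because this tail is strictly below $\tfrac12$, adding $\theta_k\,2^{-1}$ cannot cross the threshold in a way that flips the answer: if $\theta_k=0$ then $r_k<\tfrac12$ and $\sigma_3(2^k a)=0$, while if $\theta_k=1$ then $r_k\in[\tfrac12,1)$ and $\sigma_3(2^k a)=1$. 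Either way $\sigma_3(2^k a)=\theta_k$, and in both cases $r_k\in[0,1)$ confirms it really is the fractional part. The same estimate applied with the tail starting at $j=1$ gives $a=\sum_{j=1}^{K}\theta_j 2^{-j-1}<\tfrac12$, so $a\in[0,\tfrac12)$ as asserted.

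I expect the entire difficulty to be this bookkeeping around the threshold $\tfrac12$: one must verify that the sub-half tail never pushes $r_k$ across the half-integer boundary dictated by $\theta_k$, and that the integer contributions are correctly absorbed into $\lfloor 2^k a\rfloor$ so that $r_k$ is literally $2^k a-\lfloor 2^k a\rfloor$. No analytic input beyond the finite geometric-sum bound is required.
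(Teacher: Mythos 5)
Your proof is correct and follows essentially the same route as the paper's: the identical three-term splitting of $2^k a$ into an integer part, the term $\tfrac12\theta_k$, and a tail bounded strictly below $\tfrac12$ by a geometric sum, followed by the same case analysis on $\theta_k$. The only cosmetic difference is that you phrase the conclusion via the fractional part $2^k a-\lfloor 2^k a\rfloor$, whereas the paper states it as membership of $2^k a$ in $\bigcup_{n\in\N}[n,n+\tfrac12)$ versus $\bigcup_{n\in\N}[n+\tfrac12,n+1)$, which is the same criterion for the value of $\sigma_3$.
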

\begin{proof}
	
	Since $\theta_j\in\{0,1\}$ for $j\in\{1,2,\cdots,K\}$,  we have \[0\le \sum_{j=1}^{K} 2^{-j-1}\cdot\theta_j\le \sum_{j=1}^{K} 2^{-j-1}<\tfrac12,\] implying
	$a\in[0,\tfrac12)$.
	
	Next, fix $k\in\{1,2,\cdots,K\}$ for the proof below. It holds that
	\begin{equation}\label{eq:3term}
		2^k\cdot a=2^k\cdot \sum_{j=1}^{K} 2^{-j-1}\cdot\theta_j=\underbrace{\sum_{j=1}^{k-1}2^{k-j-1}\cdot\theta_j}_{\tn{an integer}}\ +\
		\overbrace{\tfrac12 \theta_k}^{\tn{$0$ or $\tfrac12$}}\ +\  \underbrace{\sum_{j=k+1}^{K}2^{k-j-1}\cdot\theta_j}_{\tn{in $[0,\tfrac12)$}}.\footnote{By convention, $\sum_{j=n}^m a_j=0$ if $n>m$ no matter what $a_j$ is for each $j$.}
	\end{equation}
	Clearly, the first term in Equation~\eqref{eq:3term} $\sum_{j=1}^{k-1}2^{k-j-1}\cdot\theta_j$ is a non-negative integer since $\theta_j\in\{0,1\}$ for any $j\in \{1,2,\cdots,K\}$. As for the third term in Equation~\eqref{eq:3term}, we have
	\begin{equation*}
		0\le \sum_{j=k+1}^{K}2^{k-j-1}\cdot\theta_j\le \sum_{j=k+1}^{K}2^{k-j-1}<\tfrac12
	\end{equation*}
	Therefore, by Equation~\eqref{eq:3term}, we have
	\begin{equation}\label{eq:2case}
		2^k\cdot a\in \bigcup_{n\in\N}[n,n+\tfrac12), \tn{ if $\theta_k=0$}\quad \tn{and}\quad 2^k\cdot a\in \bigcup_{n\in\N}[n+\tfrac12,n+1), \tn{ if $\theta_k=1$.}
	\end{equation}
	Recall that $\sigma_3(x)=\calT(x-\lfloor x\rfloor-\tfrac12)$, where $\calT(x)=\left\{\begin{smallmatrix*}[l]
		1,\ x\ge0,\\ 0,\ x<0\phantom{.}
	\end{smallmatrix*}\right.$. It is easy to verify that 
	\begin{equation*}
		\tn{$\sigma_3(x)=0$ if $x\in \bigcup_{n\in\N}[n,n+\tfrac12)$\quad and \quad $\sigma_3(x)=1$ if $x\in \bigcup_{n\in\N}[n+\tfrac12,n+1)$.}
	\end{equation*}	
	If $\theta_k=0$, by Equation~\eqref{eq:2case}, we have 
	\begin{equation*}
		2^k\cdot a\in  \bigcup_{n\in\N}[n,n+\tfrac12)\quad \Longrightarrow\quad  \sigma_3(2^k\cdot a)=0=\theta_k.
	\end{equation*}
	Similarly, if $\theta_k=1$, by Equation~\eqref{eq:2case}, we have 
	\begin{equation*}
		2^k\cdot a\in  \bigcup_{n\in\N}[n+\tfrac12,n+1)\quad \Longrightarrow\quad  \sigma_3(2^k\cdot a)=1=\theta_k.
	\end{equation*}
	
	Since $k\in\{1,2,\cdots,K\}$ is arbitrary, we have $\sigma_3\big(a\cdot \sigma_2(k)\big)=\sigma_3(2^k\cdot a)=\theta_k$ for any $k\in\{1,2,\cdots,K\}$. So we finish the proof.
\end{proof}

We would like to point out that
Proposition~\ref{prop:bitsExtract} indicates that the VC-dimension of the function space
\[ \{f: f(x)=\sigma_3(a\cdot x),\ \tn{for} \ a\in\R\}\] is infinity,  \black{which implies that the VC-dimension of FLES networks is also infinity. As discussed previously in Section~\ref{sec:approx:vcdim}, having an infinite VC-dimension is a necessary condition for our FLES networks to attain super approximation power.}

\vspace{18pt}
With Proposition~\ref{prop:bitsExtract} in hand, we are ready to prove Theorem~\ref{thm:mainOld}.

\begin{proof}[Proof of Theorem~\ref{thm:mainOld}]
	The proof consists of five steps.
	\mystep{1}{Set up.}
	Assume $f$ is not a constant function since it is a trivial case. Then $\omega_f(r)>0$ for any $r>0$. Clearly, $|f(\bmx)-f(\bmzero)|\le \omega_f(\sqrt{d})$ for any $\bmx\in [0,1)^d$. Define 
	\begin{equation}
		\label{eq:tildef}
		\tildef\coloneqq\frac{f-f(\bmzero)+\omega_f(\sqrt{d})}{2\omega_f(\sqrt{d})}.
	\end{equation} 
	It follows that $\tildef(\bmx)\in [0,1]$ for any $\bmx\in [0,1)^d$.

	Set $J=2^N$ and divide $[0,1)^d$ into $J^d$ cubes $\{Q_\bmbeta\}_\bmbeta$. To be exact,  defined $\bmx_\bmbeta \coloneqq \bmbeta/J$ and 
	\begin{equation*}
		Q_\bmbeta\coloneqq \Big\{\bmx=(x_1,x_2,\cdots,x_d):x_i\in [\tfrac{\beta_i}{J},\tfrac{\beta_i+1}{J}) \tn{ for } i=1,2,\cdots,d\Big\},
	\end{equation*}
	for each $\bmbeta=(\beta_1,\beta_2,\cdots,\beta_d)\in \{0,1,\cdots,J-1\}^d$.
	See Figure~\ref{fig:Qbeta+xbeta} for illustrations.

	\begin{figure}[!htp]
		\centering
		\begin{subfigure}[b]{0.4\textwidth}
			\centering            \includegraphics[width=0.999\textwidth]{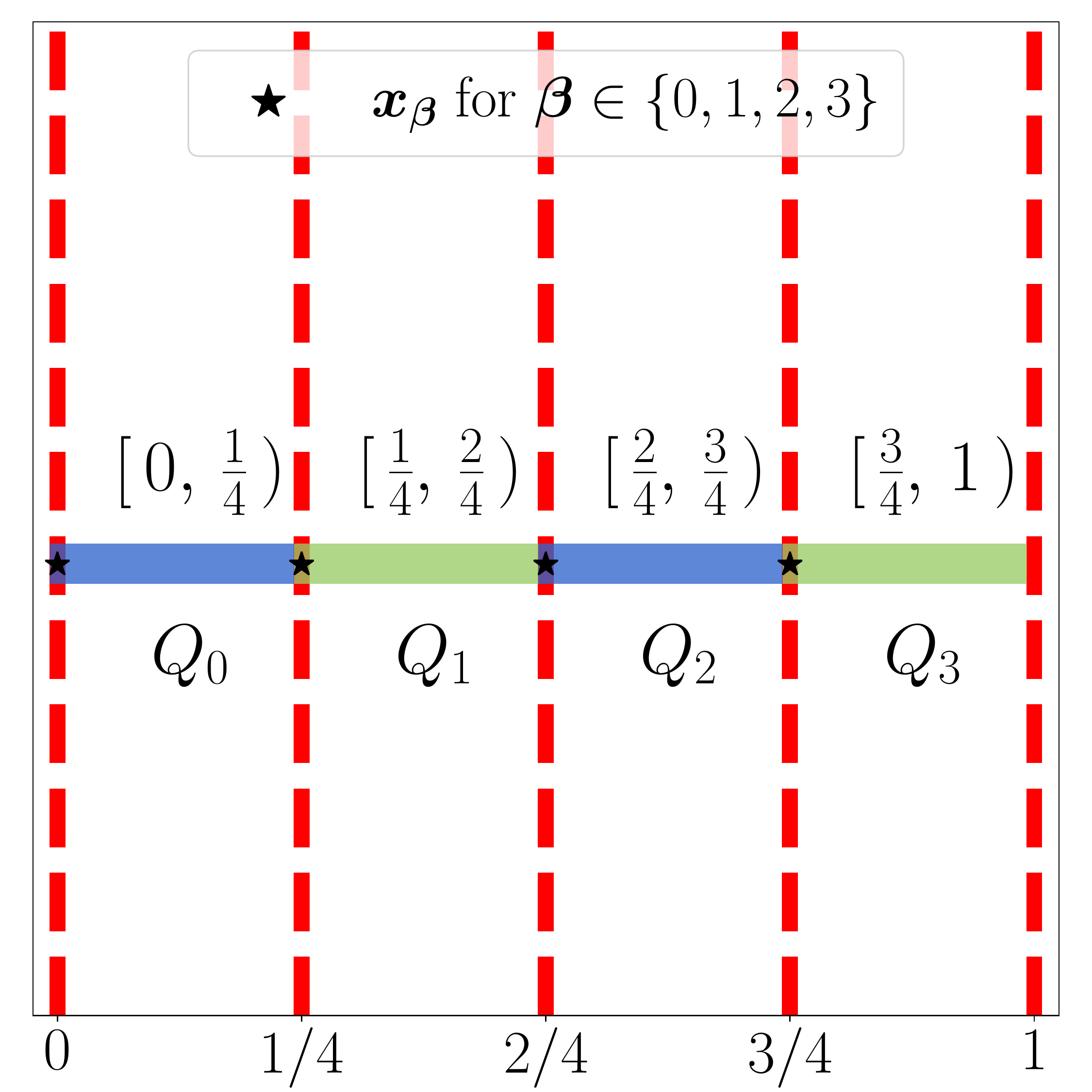}
			\subcaption{}
		\end{subfigure}
				\begin{minipage}{0.04\textwidth}
					\
				\end{minipage}
		\begin{subfigure}[b]{0.4\textwidth}
			\centering            \includegraphics[width=0.999\textwidth]{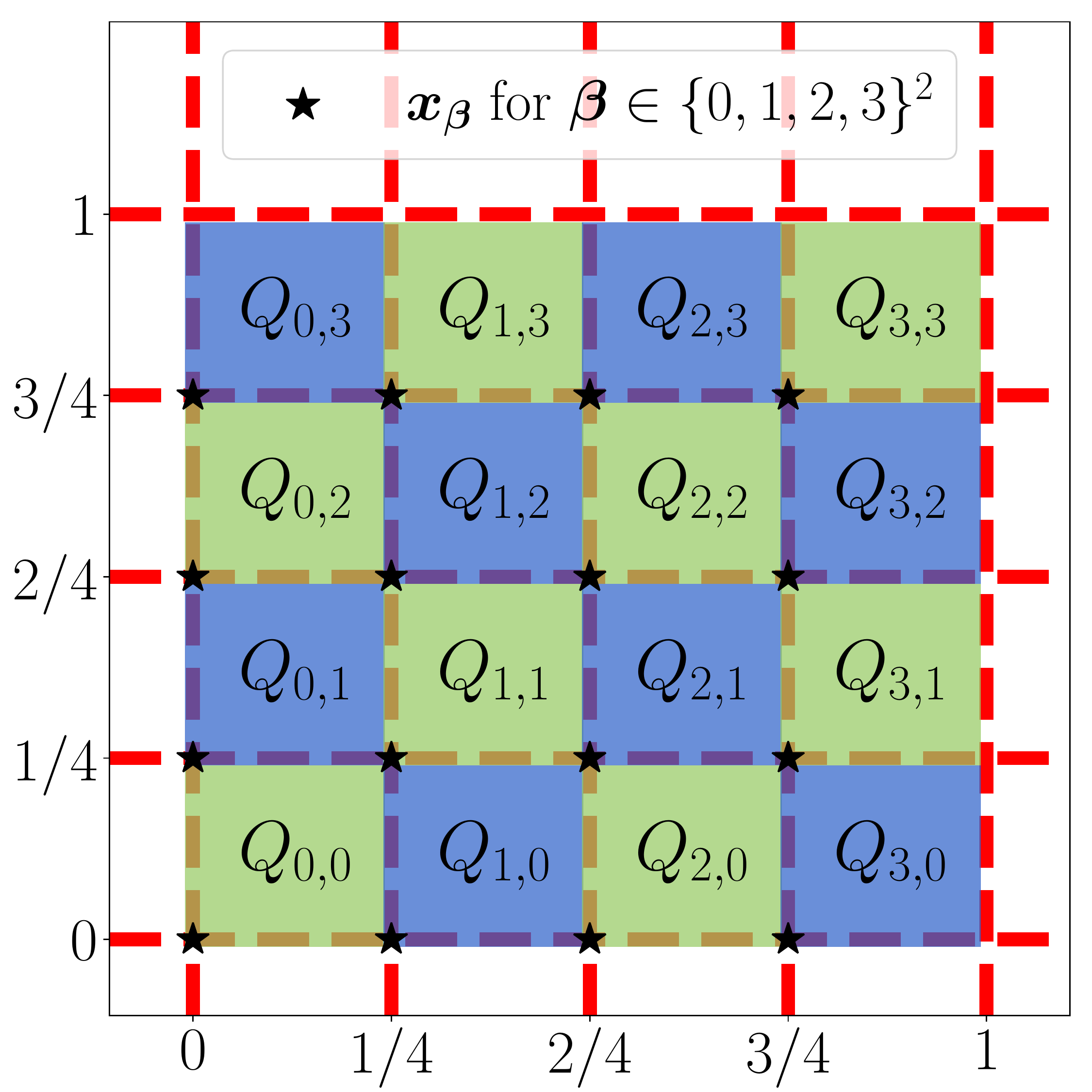}
			\subcaption{}
		\end{subfigure}
		\caption{Illustrations of $Q_\bmbeta$ and $\bmx_\bmbeta$ for any $\bmbeta\in \{0,1,\cdots,J-1\}^d$. (a) $J=4,\ d=1$. (b)  $J=4,\ d=2$.}
		\label{fig:Qbeta+xbeta}
	\end{figure}
	
	\mystep{2}{Construct $\bmPhi_1$ mapping $\bmx\in Q_\bmbeta$ to $\bmbeta$ for each $\bmbeta\in\{0,1,\cdots,J-1\}^d$.}
	Define 
	\begin{equation*}
		\bmPhi_1(\bmx)
		\coloneqq \Big(\sigma_1(J x_1), \sigma_1(J x_2), \cdots ,\sigma_1(J x_d) \Big)
		=\Big(\lfloor J x_1\rfloor, \lfloor J x_2\rfloor, \cdots ,\lfloor J x_d\rfloor \Big),
	\end{equation*}
	\tn{for any $\bmx=(x_1,x_2,\cdots,x_d)\in\R^d$.}
	Then, for any $\bmx\in Q_\bmbeta$ and each $\bmbeta\in\{0,1,\cdots,J-1\}^d$, we have
	\begin{equation}\label{eq:eq1}
		\bmPhi_1(\bmx)=\Big(\lfloor J x_1\rfloor, \lfloor J x_2\rfloor, \cdots ,\lfloor J x_d\rfloor \Big)=(\beta_1,\beta_2,\cdots,\beta_d)=\bmbeta.
	\end{equation}

	\mystep{3}{Construct $\phi_2$ bijectively mapping $\bmbeta\in\{0,1,\cdots,J-1\}^d$ to $\phi_2(\bmbeta)\in \{1,2,\cdots,J^d\}$.}
	Inspired by the $J$-ary representation, we define a linear function
	\begin{equation*}
		\phi_2(\bmx)\coloneqq 1+\sum_{i=1}^{d}J^{i-1} x_i,\quad \tn{for each $\bmx=(x_1,x_2,\cdots,x_d)\in\R^d$.}
	\end{equation*}
	Then $\phi_2$ is a bijection from $\{0,1,\cdots,J-1\}^d$ to $\{1,2,\cdots,J^d\}$.

	\mystep{4}{Construct $\phi_3$ mapping $\phi_2(\bmbeta)\in\{1,2,\cdots,J^d\}$ approximately to $\tildef(\bmx_\bmbeta)$.}
	
	For each $k\in\{1,2,\cdots,J^d\}$, there exists a unique $\bmbeta\in\{0,1,\cdots,J-1\}^d$ such that $\phi_2(\bmbeta)=k$. Thus, define
	\begin{equation}\label{eq:eq2}
		\xi_{k}\coloneqq \tildef(\bmx_\bmbeta)\in [0,1],\quad \tn{for any $k\in\{1,2,\cdots,J^d\}$ with $k=\phi_2(\bmbeta)$.}
	\end{equation}
	For each $k\in \{1,2,\cdots,J^d\}$, there exist $\theta_{k,1},\theta_{k,2},\cdots,\theta_{k,N}\in \{0,1\}$ such that
	\begin{equation}\label{eq:eq3}
		|\xi_k-\bin 0.\theta_{k,1}\theta_{k,2}\cdots\theta_{k,N}|\le 2^{-N}.
	\end{equation}
	
	For each $j\in \{1,2,\cdots,N\}$, by Proposition~\ref{prop:bitsExtract} (set $K=J^d$ therein), there exists $a_j\in [0,\tfrac12)$ such that
	\begin{equation*}
		\sigma_3(2^k\cdot a_j)=\theta_{k,j}, \quad \tn{for any $k\in \{1,2,\cdots,J^d\}$.}
	\end{equation*}
	Define 
	\[\phi_{3}(x)\coloneqq \sum_{j=1}^N 2^{-j}\sigma_3\big(a_j\cdot\sigma_2(x)\big)= \sum_{j=1}^N 2^{-j}\sigma_3(2^x\cdot a_j),\quad \tn{for any $x\in\R$}.\] 
	Then, for any $k\in \{1,2,\cdots,J^d\}$, we have
	\begin{equation}\label{eq:eq4}
		\phi_{3}(k)=\sum_{j=1}^N 2^{-j}\sigma_3(2^k\cdot a_j)=\sum_{j=1}^N 2^{-j}\cdot \theta_{k,j}=\bin 0.\theta_{k,1}\theta_{k,2}\cdots\theta_{k,N}.
	\end{equation}

	\mystep{5}{	
		Define $\tildephi\coloneqq \phi_3\circ\phi_2\circ\bm{\Phi}_1$ approximating $\tildef$ well, and re-scale and shift $\tildephi$ to obtain $\phi$ approximating $f$ well.}
	Define $\tildephi\coloneqq \phi_3\circ\phi_2\circ\bmPhi_1$, by Equation~\eqref{eq:eq1}, \eqref{eq:eq2}, \eqref{eq:eq3}, and \eqref{eq:eq4}, we have, for any $\bmx\in Q_\bmbeta$ and each $\bmbeta\in \{0,1,\cdots,J-1\}^d$ with $k=\phi_2(\bmbeta)$,
	\begin{equation*}
		\begin{split}
			|\tildephi(\bmx)-\tildef(\bmx)|
			&\le|\phi_3\circ\phi_2\circ\bmPhi_1(\bmx)-\tildef(\bmx_\bmbeta)|+|\tildef(\bmx_\bmbeta)-\tildef(\bmx)|\\
			&\le |\phi_3\circ\phi_2(\bmbeta)-\tildef(\bmx_\bmbeta)|+\omega_{\tildef}(\tfrac{\sqrt{d}}{J})
			\le |\phi_3(k)-\xi_k|+\omega_{\tildef}(\tfrac{\sqrt{d}}{J})\\
			&\le 
			|\bin 0.\theta_{k,1}\theta_{k,2}\cdots\theta_{k,N}-\xi_k|+\omega_{\tildef}(\tfrac{\sqrt{d}}{J})\le 2^{-N}+\omega_{\tildef}(\tfrac{\sqrt{d}}{J}).
		\end{split}
	\end{equation*}
	
	Finally, define $\phi\coloneqq 2\omega_f(\sqrt{d})\tildephi +f(\bmzero)-\omega_f(\sqrt{d})$. Equation~\eqref{eq:tildef} implies $\omega_f(r)=2\omega_f(\sqrt{d})\omega_\tildef(r)$ for any $r\ge 0$, deducing
	\begin{equation*}
		\begin{split}
			|\phi(\bmx)-f(\bmx)|=2\omega_f(\sqrt{d}) |\tildephi(\bmx)-\tildef(\bmx)|
			&	\le 2\omega_f(\sqrt{d})\big(2^{-N}+\omega_{\tildef}(\tfrac{\sqrt{d}}{J})\big)\\
			&=2\omega_f(\sqrt{d})2^{-N}+\omega_{f}(\tfrac{\sqrt{d}}{J})\\
			&
			=2\omega_f(\sqrt{d})2^{-N}+\omega_{f}(\sqrt{d}\,2^{-N}),
		\end{split}
	\end{equation*}
	for any $\bmx\in \bigcup_{\bmbeta\in\{0,1,\cdots,J-1\}^d}Q_\bmbeta=[0,1)^d$.
	It follows from $J=2^N$ and the definitions of $\bmPhi_1$, $\phi_2$, and $\phi_3$ that
	\begin{equation*}
		\begin{split}
			\phi(\bmx)
			&=2\omega_f(\sqrt{d})\phi_3\circ\phi_2\circ\bmPhi_1(\bmx) +f(\bmzero)-\omega_f(\sqrt{d})\\
			&=2\omega_f(\sqrt{d}) \phi_3\Big(\, 1+\sum_{i=1}^{d} J^{i-1}\sigma_1(J x_i)\, \Big)+f(\bmzero)-\omega_f(\sqrt{d})\\
			&=2\omega_f(\sqrt{d})\sum_{j=1}^ N 2^{-j} \sigma_3\bigg(\, a_j\cdot\sigma_2\Big(\, 1+\sum_{i=1}^{d} 2^{(i-1)N}\sigma_1(2^N x_i)\, \Big)\, \bigg)+f(\bmzero)-\omega_f(\sqrt{d}).
		\end{split}
	\end{equation*}
	So we finish the proof.
\end{proof}

\section{Approximation with continuous activation functions}\label{sec:more}
\black{
As discussed previously, our FLES networks can attain
super approximation power.
However,  two activation functions in FLES networks
are piecewise constant functions that would lead to challenges in numerical algorithm design. 
It is interesting to explore continuous activation functions achieving similar results.
To this end, we introduce three new activation functions as follows.
First, for any $\delta\in (0,1)$, we  define
\begin{equation*}
	\varrho_{1,\delta}(x)
	\coloneqq
	\left\{
	\begin{array}{ll}
		n-1,\ &x\in [n-1,n-\delta], \\
		(x-n+\delta)/\delta,\ & x\in (n-\delta,n],
	\end{array}
	\right.
	\quad  \tn{ for any }n\in \Z.
\end{equation*}
In fact, $\varrho_{1,\delta}$ can be regarded as a ``continuous version'' of the floor function.
Next, we define
\begin{equation*}
	\varrho_2(x)\coloneqq 3^x,\quad \tn{and}\quad \varrho_3(x)\coloneqq \widetilde{\calT}\big(\cos(2\pi  x)\big),\quad \tn{for any $x\in \R$,}
\end{equation*}
where 
\begin{equation*}
	\tildecalT(x)
	\coloneqq \left\{
	\begin{array}{ll}
		0,\ &  x\in (\cos(\tfrac{4\pi}{9}),\infty),\\
		1-x/\cos(\tfrac{4\pi}{9}),\ &x\in [0,\cos(\tfrac{4\pi}{9})],\\
		1,\ &  x\in (-\infty,0)
	\end{array}
	\right.
\end{equation*}
is a continuous piecewise linear function. $\varrho_2$ plays the same role of $\sigma_2(x)=2^x$ and $\varrho_3$ is essentially a ``continuous version'' of $\sigma_3$ in FLES networks.

With these three activation functions in hand, we have the following theorem.
\begin{theorem}\label{thm:main:cos}
Let $f$ be an arbitrary continuous function defined on $[0,1]^d$. For any $\delta \in (0,1)$, $N\in \N^+$, and  $p\in[1,\infty)$,  there exist $a_1,a_2,\cdots,a_ N\in[0,\tfrac29)$ such that
	\begin{equation*}
		\mathResize[0.93]{
			\|\phi-f\|_{L^p([0,1]^d)}^p\le \Big(2\omega_f(\sqrt{d})2^{-N}+\omega_{f}(\sqrt{d}\,2^{-N})\Big)^p+ 2d\delta\big(|f(\bmzero)|+\omega_f(\sqrt{d})\big)^p,
		}
	\end{equation*}
	where $\phi$ is defined by a formula in $a_1,a_2,\cdots,a_N$ as follows
	\begin{equation*}
			\mathResize[0.93]{
				\phi(\bmx)=2\omega_f(\sqrt{d})\sum_{j=1}^ N 2^{-j} \varrho_3\bigg(\, a_j\cdot\varrho_2\Big(\, 1+\sum_{i=1}^{d} 2^{(i-1)N}\varrho_{1,\delta}(2^{N} x_i)\, \Big)\, \bigg)+f(\bmzero)-\omega_f(\sqrt{d}).
				}
	\end{equation*}
\end{theorem}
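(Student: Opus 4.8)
The plan is to run, almost verbatim, the five-step scheme that proves Theorem~\ref{thm:mainOld}, but with the continuous surrogates $\varrho_{1,\delta},\varrho_2,\varrho_3$ in place of $\sigma_1,\sigma_2,\sigma_3$: build the projection $\bmPhi_1$ from $\varrho_{1,\delta}$, keep the linear map $\phi_2(\bmx)=1+\sum_{i=1}^{d}2^{(i-1)N}x_i$ unchanged, and build the fitting map $\phi_3$ from $\varrho_2$ and $\varrho_3$ with $J=2^N$. The essential difference is that $\varrho_{1,\delta}$ equals $\sigma_1=\lfloor\cdot\rfloor$ only away from the $\delta$-neighborhoods of the integers. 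Hence the continuous network coincides with the piecewise-constant network of Theorem~\ref{thm:mainOld} only off a small ``transition set'', and this is exactly why the error is now measured in $L^p$: on the transition set we cannot control the pointwise error, but that set has small Lebesgue measure.

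The first new ingredient I would establish is a base-$3$ analogue of the bit-extraction Proposition~\ref{prop:bitsExtract}, adapted to $\varrho_2(x)=3^x$ and $\varrho_3(x)=\tildecalT(\cos(2\pi x))$. Given bits $\theta_1,\dots,\theta_K\in\{0,1\}$, set $a=\sum_{m=1}^{K}\theta_m 3^{-m-1}$, so that $a\in[0,\tfrac16]\subseteq[0,\tfrac29)$. A direct computation shows that the fractional part of $3^k a$ equals $\tfrac{\theta_k}{3}+\sum_{m>k}\theta_m 3^{k-m-1}$, which lies in $[0,\tfrac16)$ when $\theta_k=0$ and in $[\tfrac13,\tfrac12)$ when $\theta_k=1$. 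Consequently $\cos(2\pi\cdot 3^k a)>\tfrac12$ in the first case and $<-\tfrac12$ in the second. Since $\cos(\tfrac{4\pi}{9})\in(0,\tfrac12)$, both values land strictly inside the two constant branches of $\tildecalT$, so $\varrho_3\big(a\cdot\varrho_2(k)\big)=\theta_k$ holds \emph{exactly}. The point of using base $3$ rather than base $2$ is precisely to open a margin $[\tfrac16,\tfrac13)$ between the two cases, so that the continuous readout $\varrho_3$ never sees its argument inside the linear transition zone of $\tildecalT$; with base $2$ the fractional part could approach $\tfrac12$ in both cases and the continuous readout would fail.

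With this lemma in hand I would carry out the five steps. Normalize $f$ to $\tildef\in[0,1]$ as in \eqref{eq:tildef} and choose the $a_j$ so that $\varrho_3(a_j\varrho_2(k))=\theta_{k,j}$ encodes an $N$-bit approximation of $\tildef(\bmx_\bmbeta)$ for each $k=\phi_2(\bmbeta)$. Define the \textbf{good set} $G\coloneqq\{\bmx\in[0,1)^d:\varrho_{1,\delta}(2^N x_i)=\lfloor 2^N x_i\rfloor\text{ for all }i\}$. On $G$ the vector $\bmPhi_1(\bmx)=\bmbeta$ is exactly integer-valued, $\phi_2(\bmbeta)=k$ is the exact integer, and the base-$3$ lemma reproduces the same bits as in Theorem~\ref{thm:mainOld}; the identical chain of inequalities then gives the pointwise bound $|\phi(\bmx)-f(\bmx)|\le 2\omega_f(\sqrt{d})2^{-N}+\omega_f(\sqrt{d}\,2^{-N})$ on $G$. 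The complementary \textbf{bad set} $B=[0,1)^d\setminus G$ is the union over coordinates of the preimages of the $\delta$-transition intervals, so a union-bound count gives $|B|\le 2d\delta$. On $B$ I would use only a crude bound: since $\sum_{j=1}^{N}2^{-j}\varrho_3(\cdots)\in[0,1)$ and $\varrho_3\in[0,1]$, both $\phi(\bmx)$ and $f(\bmx)$ lie in an interval controlled by $f(\bmzero)$ and $\omega_f(\sqrt{d})$, which caps $|\phi(\bmx)-f(\bmx)|$ by $|f(\bmzero)|+\omega_f(\sqrt{d})$ up to the stated constant. Splitting $\|\phi-f\|_{L^p([0,1]^d)}^p=\int_G|\phi-f|^p+\int_B|\phi-f|^p$ and using $|G|\le 1$ and $|B|\le 2d\delta$ then yields the claimed estimate.

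The main obstacle will be the interface between the two discrete-to-continuous replacements. The delicate point is to verify that \emph{on the good set} the whole pipeline $\phi_3\circ\phi_2\circ\bmPhi_1$ collapses to the exact integer arithmetic of Theorem~\ref{thm:mainOld}, i.e.\ that feeding an exact integer $k$ into $\varrho_2$ and then $\varrho_3$ reproduces $\theta_{k,j}$ with no error at all. This is where the co-design of base $3$ with the threshold $\cos(\tfrac{4\pi}{9})$ is used, guaranteeing the safety margin that makes the continuous $\varrho_3$ act exactly like the step function $\sigma_3$ at the relevant inputs. The remaining work, namely bounding the measure of $B$ and the crude pointwise error on it, is routine.
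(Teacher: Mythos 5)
Your proposal is correct and follows essentially the same route as the paper's own proof: your base-3 extraction lemma is precisely Proposition~\ref{prop:bitsExtract:cos} (same coefficients $a=\sum_{j}\theta_j 3^{-j-1}$, same safety-margin argument against the threshold $\cos(\tfrac{4\pi}{9})$), and your good-set/bad-set split coincides with the paper's decomposition of $[0,1]^d$ into the shrunken cubes $Q_\bmbeta$ (on which $\varrho_{1,\delta}$ agrees with the floor function, so the pointwise estimate of Theorem~\ref{thm:mainOld} carries over) and the transition region $\Lambda([0,1]^d,J,\delta)$, handled by the same crude $L^\infty$ bound and measure estimate. The only differences are in constants: a sharper count gives bad-set measure $d\delta$ rather than your $2d\delta$, and the crude bound produces a factor $2^p$ on the bad-set term (exactly as in the paper's own final display) rather than the factor $2$ written in the theorem statement, a looseness the paper itself shares.
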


The approximation error in Theorem~\ref{thm:main:cos} is characterized by $L^p$-norm for $p\in[1,\infty)$ instead of a pointwise  error estimate in  Theorem~\ref{thm:main}. By using  ideas in \cite{Shen3,shijun:thesis}, we can extend this result to $L^\infty$-norm.  However, this extension requires $2d+3$ hidden layers instead of $3$ hidden layers.
Since our focus here is the approximation using three hidden layers, we will leave this extension as future work.
}

To prove Theorem~\ref{thm:main:cos}, we need the following proposition.
\begin{proposition}
	\label{prop:bitsExtract:cos}
	Given any $K\in \N^+$ and arbitrary $\theta_1,\theta_2,\cdots,\theta_K\in \{0,1\}$, it holds that
	\begin{equation*}
		\varrho_3\big(a\cdot \varrho_2(k)\big)=\varrho_3(3^k\cdot a)=\theta_k,\quad \tn{for any $k\in\{1,2,\cdots,K\}$,}
	\end{equation*}
	where 
	\begin{equation*}
		a=\sum_{j=1}^{K} 3^{-j-1}\cdot\theta_j\ \in [0,\tfrac29).
	\end{equation*}
\end{proposition}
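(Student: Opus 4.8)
The plan is to mirror the proof of Proposition~\ref{prop:bitsExtract} almost verbatim, with the base $2$ replaced by base $3$ and the floor-based activation $\sigma_3$ replaced by the cosine-based activation $\varrho_3$. First I would record the range of $a$: since every $\theta_j\in\{0,1\}$, a geometric-series estimate gives $0\le a\le \sum_{j=1}^{K}3^{-j-1}=\tfrac16(1-3^{-K})<\tfrac16<\tfrac29$, so $a\in[0,\tfrac29)$ as claimed. Next, fixing $k\in\{1,2,\cdots,K\}$, I would expand $3^k\cdot a=\sum_{j=1}^{K}3^{\,k-j-1}\theta_j$ and split the sum at $j=k$ into three pieces: the terms with $j\le k-1$ form a non-negative integer, the $j=k$ term equals $\theta_k/3\in\{0,\tfrac13\}$, and the tail $\sum_{j=k+1}^{K}3^{\,k-j-1}\theta_j$ lies in $[0,\tfrac16)$ by another geometric-series bound. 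Consequently the fractional part of $3^k a$ lies in $[0,\tfrac16)$ when $\theta_k=0$ and in $[\tfrac13,\tfrac12)$ when $\theta_k=1$; the untouched gap $[\tfrac16,\tfrac13)$ is the safety margin that makes the construction robust.

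The second stage is to push these two fractional-part intervals through the nonlinearity $\varrho_3(x)=\tildecalT(\cos(2\pi x))$. Because $\cos(2\pi x)=\cos(2\pi\{x\})$ depends only on the fractional part $\{x\}$, I only need the behaviour of $\cos(2\pi t)$ on the two intervals. On $[0,\tfrac16)$ the cosine decreases from $1$ toward $\cos(\tfrac{\pi}{3})=\tfrac12$, so it lies in $(\tfrac12,1]$; on $[\tfrac13,\tfrac12)$ it runs from $\cos(\tfrac{2\pi}{3})=-\tfrac12$ toward $\cos(\pi)=-1$, so it lies in $(-1,-\tfrac12]$. Applying $\tildecalT$ and using the numerical fact $\cos(\tfrac{4\pi}{9})<\tfrac12$, the values in $(\tfrac12,1]$ fall in the region $(\cos(\tfrac{4\pi}{9}),\infty)$ where $\tildecalT=0$, while the values in $(-1,-\tfrac12]$ fall in $(-\infty,0)$ where $\tildecalT=1$. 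Hence $\varrho_3(3^k a)=\theta_k$, and since $\varrho_2(k)=3^k$ this is exactly the desired identity $\varrho_3\big(a\cdot\varrho_2(k)\big)=\theta_k$ for every $k$.

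The step I expect to require the most care is the passage through $\cos$: unlike the piecewise-constant $\sigma_3$ in Proposition~\ref{prop:bitsExtract}, where a fractional part below or above $\tfrac12$ directly decided the output, here the globally non-monotonic cosine must be shown to send the two fractional-part intervals onto regions lying strictly on opposite sides of the $\tildecalT$ thresholds. The key quantitative checks are $\cos(\tfrac{\pi}{3})=\tfrac12$, $\cos(\tfrac{2\pi}{3})=-\tfrac12$, and $0<\cos(\tfrac{4\pi}{9})<\tfrac12$, together with the margin $[\tfrac16,\tfrac13)$, which jointly guarantee that the cosine values never enter the linear transition region $[0,\cos(\tfrac{4\pi}{9})]$ of $\tildecalT$ and therefore that the output is an exact bit rather than an intermediate value.
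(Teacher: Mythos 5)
Your proposal is correct and follows essentially the same route as the paper's proof: the same three-way decomposition of $3^k a$ into an integer, the term $\theta_k/3$, and a small tail, followed by tracking the fractional part through $\varrho_3=\tildecalT\circ\cos(2\pi\,\cdot)$. The only differences are cosmetic — you use the sharper tail bound $[0,\tfrac16)$ where the paper uses $[0,\tfrac29)$, and you verify the cosine image on the two intervals directly rather than quoting it, but the logic is identical.
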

\begin{proof}
	
	Since $\theta_j\in\{0,1\}$ for $j\in\{1,2,\cdots,K\}$,  we have \[0\le \sum_{j=1}^{K} 3^{-j-1}\cdot\theta_j\le \sum_{j=1}^{K} 3^{-j-1}<\tfrac29,\] implying
	$a\in[0,\tfrac29)$.
	
	Next, fix $k\in\{1,2,\cdots,K\}$ for the proof below. It holds that
	\begin{equation}\label{eq:3term:cos}
		3^k\cdot  a=3^k\cdot \sum_{j=1}^{K} 3^{-j-1}\cdot\theta_j
		=\underbrace{\sum_{j=1}^{k-1}3^{k-j-1}\cdot\theta_j}_{\tn{an integer}}
		\ +\
		\overbrace{\tfrac13 \theta_k}^{\tn{$0$ or $\tfrac13$}}
		\ +\  
		\underbrace{\sum_{j=k+1}^{K}3^{k-j-1}\cdot\theta_j}_{\tn{in $[0,\tfrac29)$}}.
	\end{equation}
	Clearly, the first term $\sum_{j=1}^{k-1}3^{k-j-1}\cdot\theta_j$ in Equation~\eqref{eq:3term:cos} is a non-negative integer since $\theta_j\in\{0,1\}$ for any $j\in \{1,2,\cdots,K\}$. As for the third term in Equation~\eqref{eq:3term:cos}, we have
	\begin{equation*}
		0\le \sum_{j=k+1}^{K}3^{k-j-1}\cdot\theta_j\le \sum_{j=k+1}^{K}3^{k-j-1}<\tfrac29.
	\end{equation*}
Recall that
 	\begin{equation*}
 	\cos(2\pi  x)\in (\cos(\tfrac{4\pi}{9}),1],\quad \tn{ for any $x\in \bigcup_{n\in\N}[n,n+\tfrac29)$,}
 	\end{equation*} 
	 and 
	 \begin{equation*}
	\cos(2\pi x)\in [-1,\cos(\tfrac{2\pi}{3})]\subseteq [-1,0], \quad \tn{for any $x\in \bigcup_{n\in\N}[n+\tfrac13,n+\tfrac59)$.}
 	\end{equation*}
Note that
 \begin{equation*}
 	\tildecalT(x)
 	\coloneqq \left\{
 	\begin{array}{ll}
 		0,\ &  x\in (\cos(\tfrac{4\pi}{9}),\infty),\\
 		1-x/\cos(\tfrac{4\pi}{9}),\ &x\in [0,\cos(\tfrac{4\pi}{9})],\\
 		1,\ &  x\in (-\infty,0).
 	\end{array}
 	\right.
 \end{equation*}
	Therefore, if $\theta_k=0$, by Equation~\eqref{eq:3term:cos},  we have
	\begin{equation*}
		3^k\cdot  a\in  \bigcup_{n\in\N}[n,n+\tfrac29) 
		\quad   \Longrightarrow\quad   
		 \varrho_3(3^k\cdot  a)=\tildecalT(\cos(2\pi\cdot 3^k\cdot  a))=0=\theta_k.
	\end{equation*}
	Similarly, if $\theta_k=1$, by Equation~\eqref{eq:3term:cos}, we have 
	\begin{equation*}
	3^k\cdot  a\in  \bigcup_{n\in\N}[n+\tfrac13,n+\tfrac59) 
	\quad   \Longrightarrow\quad   
	\varrho_3(3^k\cdot  a)=\tildecalT(\cos(2\pi\cdot 3^k\cdot  a))=1=\theta_k.
\end{equation*}
	
	Since $k\in\{1,2,\cdots,K\}$ is arbitrary, we have $\varrho_3\big(a\cdot \varrho_2(k)\big)=\varrho_3(3^k\cdot  a)=\theta_k$ for any $k\in\{1,2,\cdots,K\}$. So we finish the proof.
\end{proof}

Before proving Theorem~\ref{thm:main:cos}, let us 
 define a small region as follows to simplify the notation.
Given any $J\in
\N^+$ and $\delta\in(0,1)$, define a small region $\Lambda([0,1]^d,J,\delta)$  as 
\begin{equation*}
	\Lambda([0,1]^d,J,\delta)\coloneqq\bigcup_{i=1}^{d} \bigg\{\bmx=(x_1,\cdots,x_d)\in [0,1]^d: x_i\in \bigcup_{j=1}^{J-1}[\tfrac{j-\delta}{J},\tfrac{j}{J}]\bigg\}.
\end{equation*}
In particular, $\Lambda([0,1]^d,J,\delta)=\emptyset$ if $J=1$. See Figure~\ref{fig:smallRegion} for two examples.
\begin{figure}[!htp]
	\centering
	\begin{minipage}{0.98\textwidth}
		\centering
		\begin{subfigure}[b]{0.4\textwidth}
			\centering            
			\includegraphics[width=0.999\textwidth]{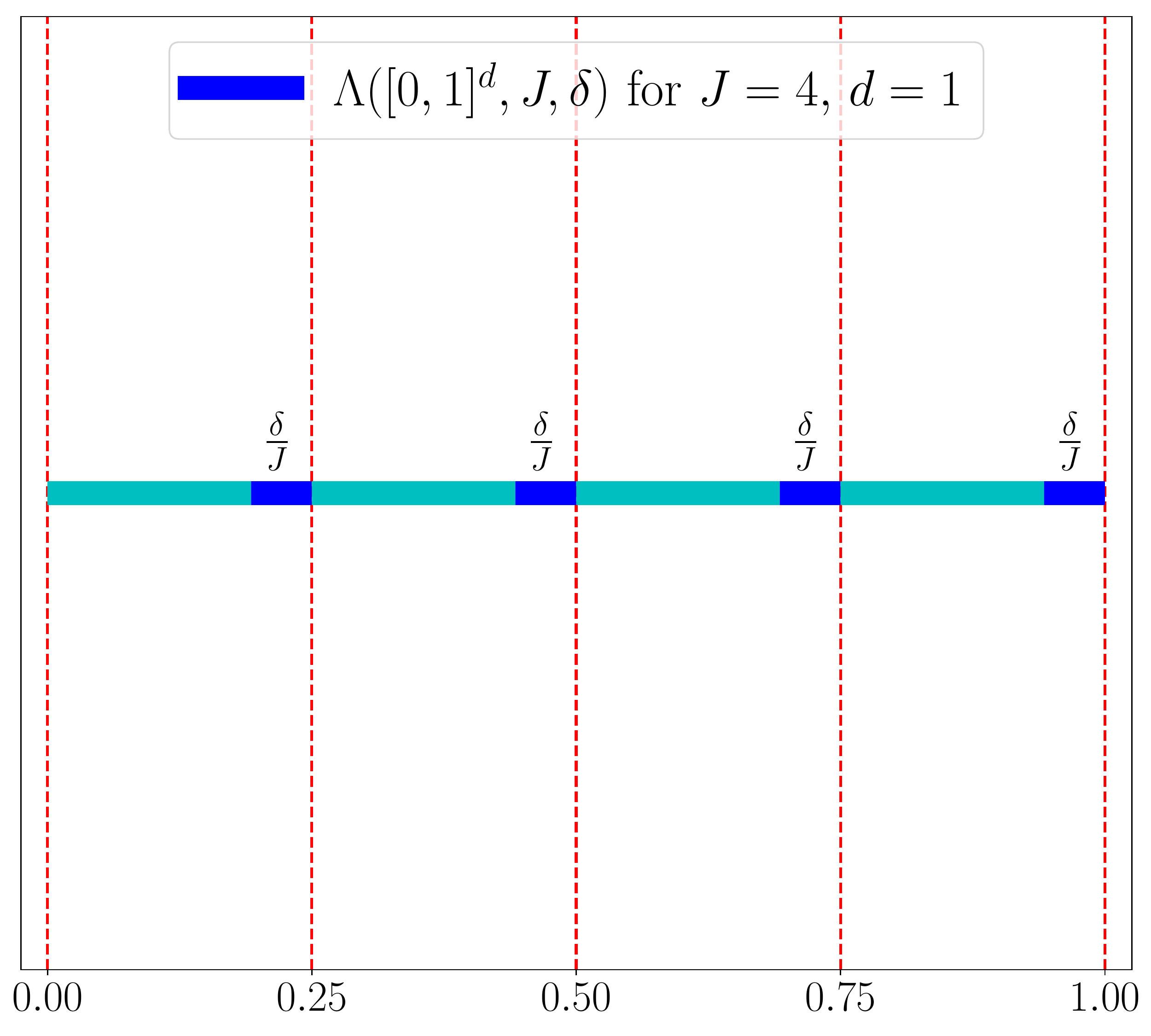}
			\subcaption{}
		\end{subfigure}
		\begin{minipage}{0.06\textwidth}
			\
		\end{minipage}
		\begin{subfigure}[b]{0.4\textwidth}
			\centering            
			\includegraphics[width=0.999\textwidth]{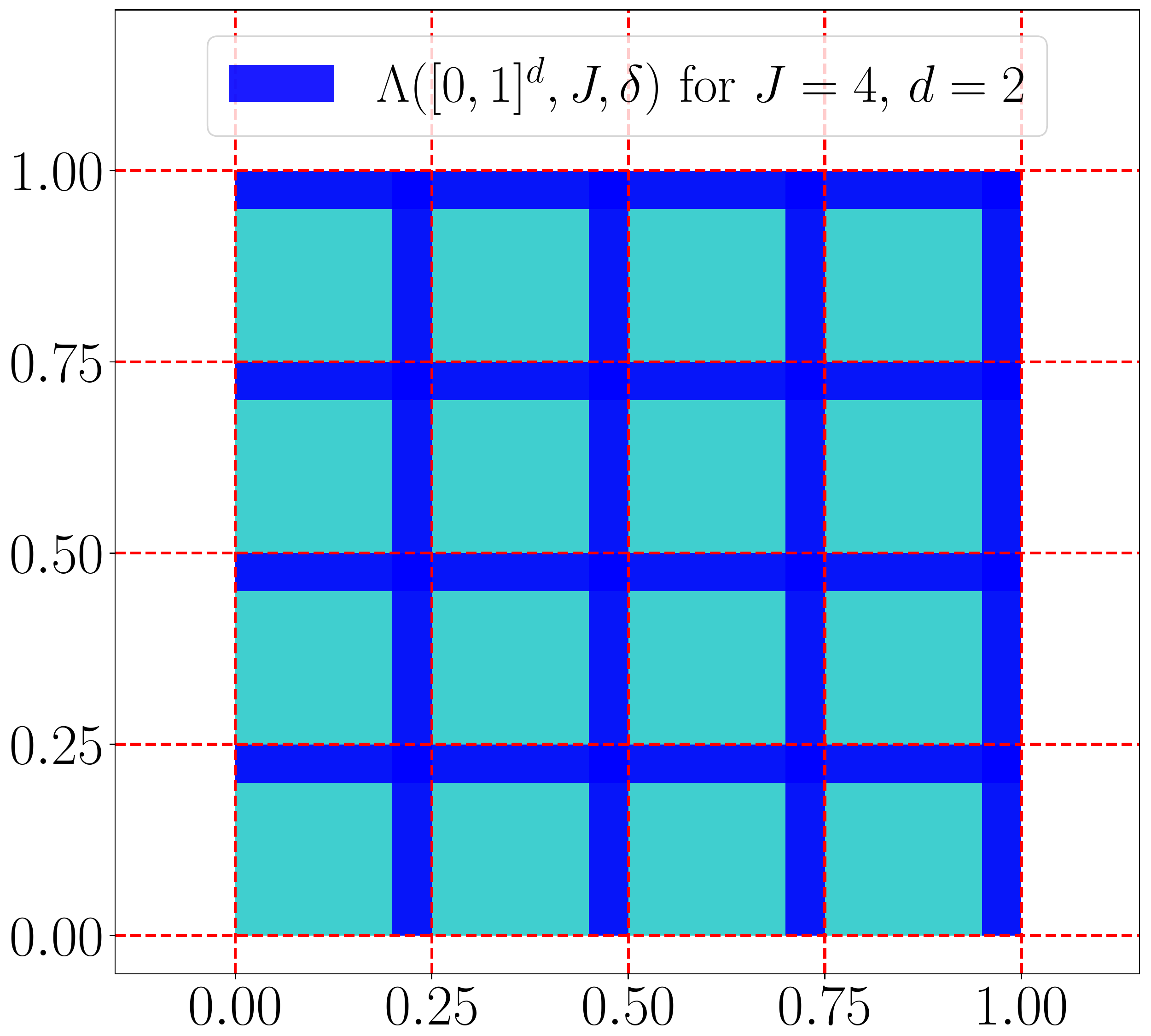}
			\subcaption{}
		\end{subfigure}
	\end{minipage}
	\caption{Illustrations of  $\Lambda([0,1]^d,J,\delta)$. (a) $J=4,\ d=1$. (b)  $J=4,\ d=2$.}
	\label{fig:smallRegion}
\end{figure}

With Proposition~\ref{prop:bitsExtract:cos} in hand, we are ready to prove Theorem~\ref{thm:main:cos}.
\begin{proof}[Proof of Theorem~\ref{thm:main:cos}]
	The proof consists of five steps.
	\mystep{1}{Set up.}
	Assume $f$ is not a constant function since it is a trivial case. Then $\omega_f(r)>0$ for any $r>0$. Clearly, $|f(\bmx)-f(\bmzero)|\le \omega_f(\sqrt{d})$ for any $\bmx\in [0,1]^d$. Define 
	\begin{equation}
		\label{eq:tildef:cos}
		\tildef\coloneqq\frac{f-f(\bmzero)+\omega_f(\sqrt{d})}{2\omega_f(\sqrt{d})}.
	\end{equation} 
	It follows that $\tildef(\bmx)\in [0,1]$ for any $\bmx\in [0,1]^d$.

	Set $J=2^N$ and divide $[0,1]^d$ into $J^d$ cubes $\{Q_\bmbeta\}_\bmbeta$ and a small region $\Lambda([0,1]^d,J,\delta)$. To be exact,  define $\bmx_\bmbeta \coloneqq \bmbeta/J$ and 
	\begin{equation*}
		Q_\bmbeta\coloneqq \Big\{\bmx=(x_1,x_2,\cdots,x_d):x_i\in [\tfrac{\beta_i}{J},\tfrac{\beta_i+1-\delta}{J}] \tn{ for } i=1,2,\cdots,d\Big\},
	\end{equation*}
	for each $\bmbeta=(\beta_1,\beta_2,\cdots,\beta_d)\in \{0,1,\cdots,J-1\}^d$.
	See Figure~\ref{fig:smallRegion+Qbeta} for illustrations.
	
	\begin{figure}[!htp]
		\centering
		\begin{minipage}{0.98\textwidth}
			\centering
			\begin{subfigure}[b]{0.4\textwidth}
				\centering            
				\includegraphics[width=0.999\textwidth]{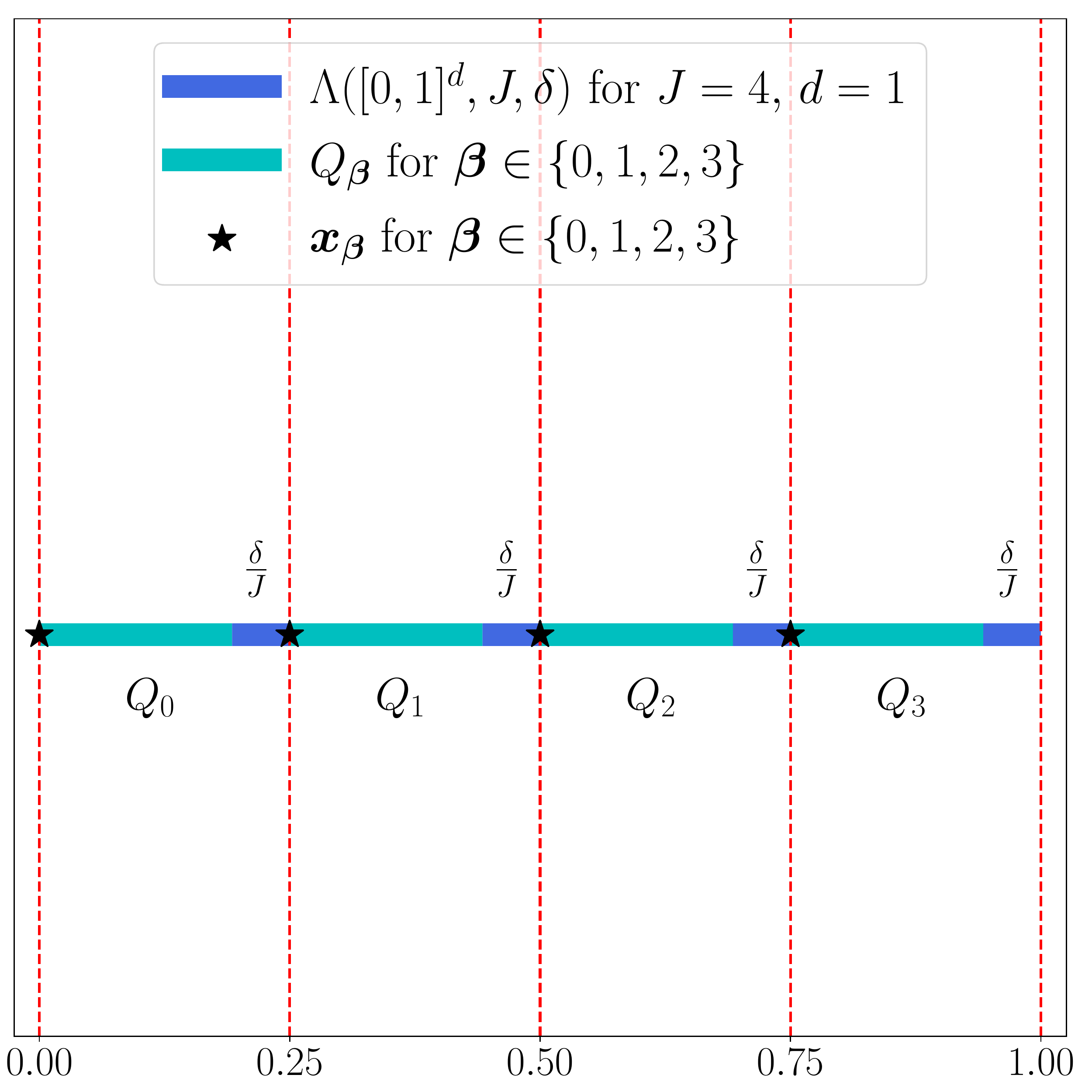}
				\subcaption{}
			\end{subfigure}
			\begin{minipage}{0.06\textwidth}
				\
			\end{minipage}
			\begin{subfigure}[b]{0.4\textwidth}
				\centering            
				\includegraphics[width=0.999\textwidth]{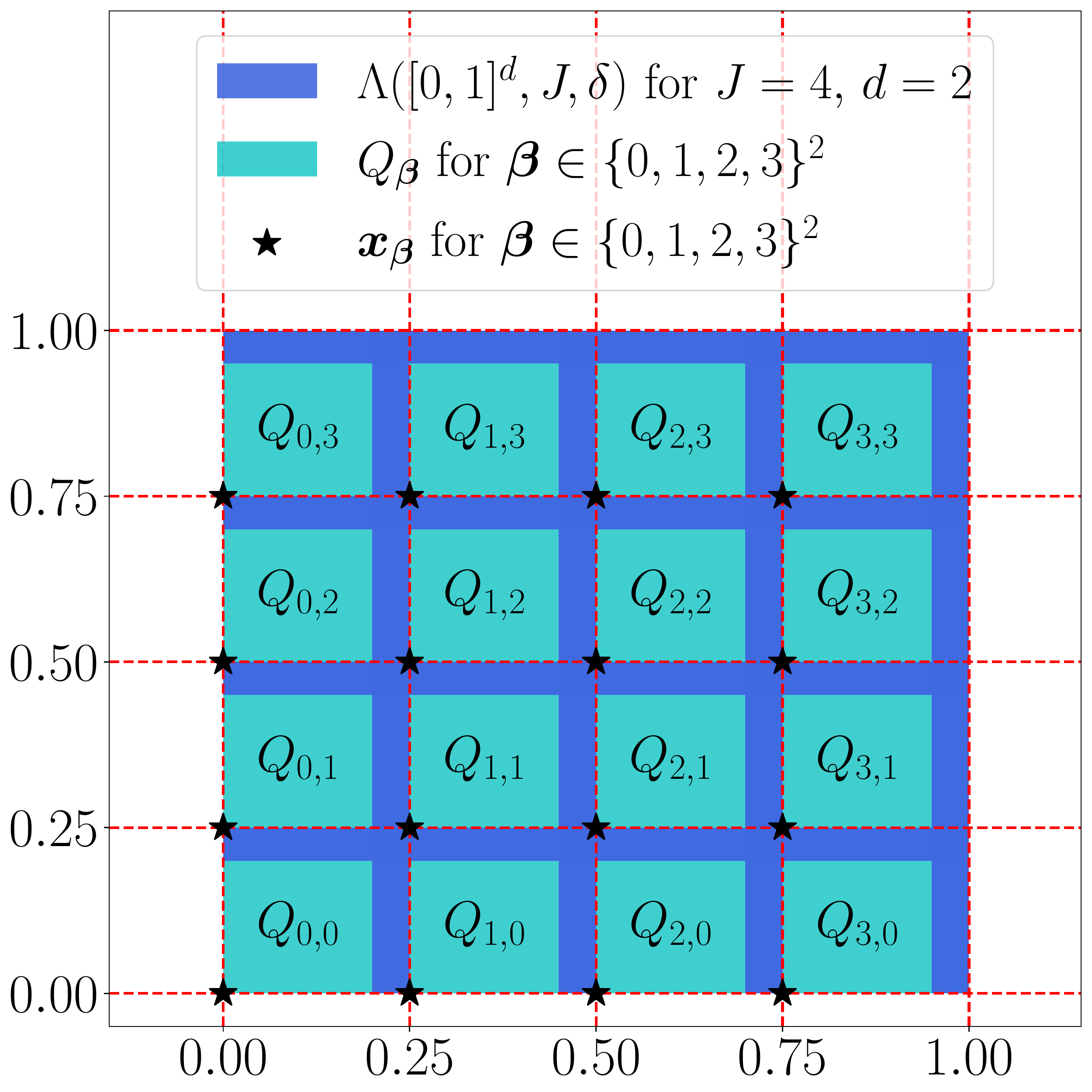}
				\subcaption{}
			\end{subfigure}
		\end{minipage}
		\caption{Illustrations of  $\Lambda([0,1]^d,J,\delta)$, $Q_\bmbeta$, and $\bmx_\bmbeta$ for any $\bmbeta\in \{0,1,\cdots,J-1\}^d$. (a) $J=4,\ d=1$. (b)  $J=4,\ d=2$.}
		\label{fig:smallRegion+Qbeta}
	\end{figure}

	\mystep{2}{Construct $\bmPhi_1$ mapping $\bmx\in Q_\bmbeta$ to $\bmbeta$ for each $\bmbeta\in\{0,1,\cdots,J-1\}^d$.}
	Define 
	\begin{equation*}
		\bmPhi_1(\bmx)\coloneqq \Big(\varrho_{1,\delta}(Jx_1), \varrho_{1,\delta}(Jx_2), \cdots ,\varrho_{1,\delta}(Jx_d) \Big),
	\end{equation*}
	\tn{for any $\bmx=(x_1,x_2,\cdots,x_d)\in\R^d$.}
	Then, for any $\bmx\in Q_\bmbeta$ and each $\bmbeta\in\{0,1,\cdots,J-1\}^d$, we have
	\begin{equation}\label{eq:eq1:cos}
		\bmPhi_1(\bmx)=\Big(\varrho_{1,\delta}(Jx_1), \varrho_{1,\delta}(Jx_2), \cdots ,\varrho_{1,\delta}(Jx_d) \Big)=(\beta_1,\beta_2,\cdots,\beta_d)=\bmbeta.
	\end{equation}

	\mystep{3}{Construct $\phi_2$ bijectively mapping $\bmbeta\in\{0,1,\cdots,J-1\}^d$ to $\phi_2(\bmbeta)\in \{1,2,\cdots,J^d\}$.}
	Inspired by the $J$-ary representation, we define an affine linear map 
	\begin{equation*}
		\phi_2(\bmx)\coloneqq 1+\sum_{i=1}^{d}J^{i-1} x_i,\quad \tn{for each $\bmx=(x_1,x_2,\cdots,x_d)\in\R^d$.}
	\end{equation*}
	Then $\phi_2$ is a bijection from $\{0,1,\cdots,J-1\}^d$ to $\{1,2,\cdots,J^d\}$.

	\mystep{4}{Construct $\phi_3$ mapping $\phi_2(\bmbeta)\in\{1,2,\cdots,J^d\}$ approximately to $\tildef(\bmx_\bmbeta)$.}
	For each $k\in\{1,2,\cdots,J^d\}$, there exists a unique $\bmbeta\in\{0,1,\cdots,J-1\}^d$ such that $\phi_2(\bmbeta)=k$. Thus, define
	\begin{equation}\label{eq:eq2:cos}
		\xi_{k}\coloneqq \tildef(\bmx_\bmbeta)\in [0,1],\quad \tn{for any $k\in\{1,2,\cdots,J^d\}$ with $k=\phi_2(\bmbeta)$.}
	\end{equation}
	For each $k\in \{1,2,\cdots,J^d\}$, there exist $\theta_{k,1},\theta_{k,2},\cdots,\theta_{k,N}\in \{0,1\}$ such that
	\begin{equation}\label{eq:eq3:cos}
		|\xi_k-\bin 0.\theta_{k,1}\theta_{k,2}\cdots\theta_{k,N}|\le 2^{-N}.
	\end{equation}
	
	For each $j\in \{1,2,\cdots,N\}$, by Proposition~\ref{prop:bitsExtract:cos} (set $K=J^d$ therein), there exists $a_j\in [0,\tfrac29)$ such that
	\begin{equation*}
		\varrho_3(3^k\cdot a_j)=\theta_{k,j}, \quad \tn{for any $k\in \{1,2,\cdots,J^d\}$.}
	\end{equation*}
	Define 
	\[\phi_{3}(x)\coloneqq \sum_{j=1}^N 2^{-j}\varrho_3\big(a_j\cdot\varrho_2(x)\big)= \sum_{j=1}^N 2^{-j}\varrho_3(3^x\cdot a_j),\quad \tn{for any $x\in\R$}.\] 
	Then we have
	\begin{equation}\label{eq:phiUB}
		\varrho_3(x)\in [0,1],\quad \tn{for any $x\in\R$} \quad \Longrightarrow\quad \phi_{3}(x)\in [0,1],\quad \tn{for any $x\in \R$,}
	\end{equation}
	and 
	\begin{equation}\label{eq:eq4:cos}
		\phi_{3}(k)=\sum_{j=1}^N 2^{-j}\varrho_3(3^k\cdot a_j)=\sum_{j=1}^N 2^{-j}\cdot \theta_{k,j}=\bin 0.\theta_{k,1}\theta_{k,2}\cdots\theta_{k,N},
	\end{equation}
	for any $k\in \{1,2,\cdots,J^d\}$.

	\mystep{5}{	
		Define $\tildephi\coloneqq \phi_3\circ\phi_2\circ\bm{\Phi}_1$ approximating $\tildef$ well, and re-scale and shift $\tildephi$ to obtain $\phi$ approximating $f$ well.}
	Define $\tildephi\coloneqq \phi_3\circ\phi_2\circ\bmPhi_1$, by Equation~\eqref{eq:eq1:cos}, \eqref{eq:eq2:cos}, \eqref{eq:eq3:cos}, and \eqref{eq:eq4:cos}, we have, for any $\bmx\in Q_\bmbeta$ and each $\bmbeta\in \{0,1,\cdots,J-1\}^d$ with $k=\phi_2(\bmbeta)$,
	\begin{equation*}
		\begin{split}
			|\tildephi(\bmx)-\tildef(\bmx)|
			&\le |\phi_3\circ\phi_2\circ\bmPhi_1(\bmx)-\tildef(\bmx_\bmbeta)|+|\tildef(\bmx_\bmbeta)-\tildef(\bmx)|\\
			&\le |\phi_3\circ\phi_2(\bmbeta)-\tildef(\bmx_\bmbeta)|+\omega_{\tildef}(\tfrac{\sqrt{d}}{J})
			\le |\phi_3(k)-\xi_k|+\omega_{\tildef}(\tfrac{\sqrt{d}}{J})\\
			&\le 
			|\bin 0.\theta_{k,1}\theta_{k,2}\cdots\theta_{k,N}-\xi_k|+\omega_{\tildef}(\tfrac{\sqrt{d}}{J})\le 2^{-N}+\omega_{\tildef}(\tfrac{\sqrt{d}}{J}).
		\end{split}
	\end{equation*}
	
	Finally, define $\phi\coloneqq 2\omega_f(\sqrt{d})\tildephi +f(\bmzero)-\omega_f(\sqrt{d})$. Equation~\eqref{eq:tildef:cos} implies $\omega_f(r)=2\omega_f(\sqrt{d})\omega_\tildef(r)$ for any $r\ge 0$, deducing
	\begin{equation*}
		\begin{split}
			|\phi(\bmx)-f(\bmx)|=2\omega_f(\sqrt{d}) |\tildephi(\bmx)-\tildef(\bmx)|
			&	\le 2\omega_f(\sqrt{d})\big(2^{-N}+\omega_{\tildef}(\tfrac{\sqrt{d}}{J})\big)\\
			&=2\omega_f(\sqrt{d})2^{-N}+\omega_{f}(\tfrac{\sqrt{d}}{J}),\\
		\end{split}
	\end{equation*}
	for any $\bmx\in \bigcup_{\bmbeta\in\{0,1,\cdots,J-1\}^d}Q_\bmbeta$. 
	By Equation~\eqref{eq:phiUB} and the definition of 
	\begin{equation*}
		\phi= 2\omega_f(\sqrt{d})\phi_3\circ\phi_2\circ\bm{\Phi}_1 +f(\bmzero)-\omega_f(\sqrt{d}),
	\end{equation*}
we have $\|\phi\|_{L^\infty(\R^d)}\le |f(\bmzero)|+\omega_f(\sqrt{d})$.
	Let $\mu(\cdot)$ denote the Lebesgue measure. Note that $\|f\|_{L^\infty([0,1]^d)}\le |f(\bmzero)|+\omega_f(\sqrt{d})$.
	If follows from $\mu(\Lambda([0,1]^d, J,\delta))\le Jd \tfrac{\delta}{J}=d\delta$ that 	
	\begin{equation*}
		\begin{split}
		&\hspace{16pt}\|\phi-f\|_{L^p([0,1]^d)}^p=\int_{[0,1]^d} |\phi(\bmx)-f(\bmx)|^p d\bmx\\
		&=\sum_{\bmbeta\in \{0,1,\cdots,J-1\}^d} \int_{Q_\bmbeta} |\phi(\bmx)-f(\bmx)|^p d\bmx
				+ \int_{\Lambda([0,1]^d,J,\delta)} |\phi(\bmx)-f(\bmx)|^p d\bmx\\
		&\le  \sum_{\bmbeta\in \{0,1,\cdots,J-1\}^d} \mu(Q_\bmbeta)\Big(2\omega_f(\sqrt{d})2^{-N}+\omega_{f}(\tfrac{\sqrt{d}}{J})\Big)^p
		+  \big(2|f(\bmzero)|+2\omega_f(\sqrt{d})\big)^p  d\delta\\
		&\le \Big(2\omega_f(\sqrt{d})2^{-N}+\omega_{f}(\sqrt{d}\,2^{-N})\Big)^p
		+ 2^p d\delta\big(|f(\bmzero)|+\omega_f(\sqrt{d})\big)^p.
		\end{split}
	\end{equation*}

	By the definitions of $\bmPhi_1$, $\phi_2$, and $\phi_3$, we have
	\begin{equation*}
		\begin{split}
			\phi(\bmx)
			&=2\omega_f(\sqrt{d})\phi_3\circ\phi_2\circ\bmPhi_1(\bmx) +f(\bmzero)-\omega_f(\sqrt{d})\\
			&=2\omega_f(\sqrt{d}) \phi_3\Big(\, 1+\sum_{i=1}^{d} J^{i-1}\varrho_{1,\delta}(J x_i)\, \Big)+f(\bmzero)-\omega_f(\sqrt{d})\\
			&=2\omega_f(\sqrt{d})\sum_{j=1}^ N 2^{-j} \varrho_3\bigg(\, a_j\cdot\varrho_2\Big(\, 1+\sum_{i=1}^{d} 2^{(i-1)N}\varrho_{1,\delta}(2^N x_i)\, \Big)\, \bigg)+f(\bmzero)-\omega_f(\sqrt{d}).
		\end{split}
	\end{equation*}
	So we finish the proof.
\end{proof}

\section{Conclusion}
\label{sec:conclusion}

This paper has introduced a theoretical framework to show that three hidden layers are enough for neural network approximation to achieve exponential convergence and avoid the curse of dimensionality for approximating functions as general as (H\"older) continuous functions. The key idea is to leverage the power of multiple simple activation functions: the floor function ($\lfloor x\rfloor$), the exponential function ($2^x$), the step function ($\one_{x\geq 0}$), or their compositions. This new class of networks is called the FLES network. Given a Lipschitz continuous function $f$ on $[0,1]^d$, it was shown by construction that FLES networks with width $\max\{d,\, N\}$ and three hidden layers admit a uniform approximation rate $6\lambda\sqrt{d}\,2^{-N}$, where $\lambda$ is the Lipschitz constant of $f$. More generally for an arbitrary continuous function $f$ on $[0,1]^d$ with a modulus of continuity $\omega_f(\cdot)$, the constructive approximation rate is $2\omega_f(2\sqrt{d}){2^{-N}}+ \omega_f(2\sqrt{d}\,2^{-N})$. \black{We also extend such a result to general bounded continuous functions on a bounded set $E\subseteq \R^d$.}
The results in this paper provide a theoretical lower bound of the power of FLES networks. Whether or not this bound is achievable in actual computation relies on advanced algorithm design as a separate line of research. \black{Finally, we have also derived similar approximation results in the $L^p$-norm for $p\in[1,\infty)$ using continuous activation functions.}

\vspace{0.5cm}

{\bf Acknowledgments.} Z.~Shen is supported by Tan Chin Tuan Centennial Professorship.  H.~Yang was partially supported by the US National Science Foundation under award DMS-1945029.

\bibliographystyle{elsarticle}%
\bibliography{references}%

\end{document}